\documentclass{article} 
\usepackage{iclr2017_conference,times}
\usepackage{hyperref}
\usepackage{url}
\usepackage{times}
\usepackage{epsfig}
\usepackage{graphicx}
\usepackage{amsmath}
\usepackage{amssymb}

\usepackage{amsfonts}
\usepackage{bm}
\usepackage{amsmath}
\usepackage{amssymb}  

\usepackage{multirow}   

\usepackage{microtype}
\usepackage{amsthm}
\newtheorem{theorem}{Theorem}
\usepackage{subfig}
\usepackage{graphicx}
\usepackage{algorithm}
\usepackage{algorithmic}
\usepackage{url}
\usepackage{comment}


\title{Privileged Multi-label Learning}

\author{Shan You$^{1}$, Chang Xu$^2$, Yunhe Wang$^{1}$, Chao Xu$^{1}$ \& Dacheng Tao$^{3}$\\
$^1$Key Lab. of Machine Perception (MOE),  Cooperative Medianet Innovation Center, \\
     ~ School of EECS, Peking University, P. R. China\\
$^2$School of Software, University of Technology Sydney\\
$^3$School of Information Technologies, University of Sydney
}

%


\begin{document}
\newcommand{\norm}[1]{\left\lVert#1\right\rVert}
\newcommand{\innerproduct}[2]{\left\langle#1, #2\right\rangle}

\renewcommand{\a}{\mathbf{a}}
\newcommand{\x}{\mathbf{x}}
\newcommand{\y}{\mathbf{y}}
\newcommand{\s}{\mathbf{s}}
\newcommand{\z}{\mathbf{z}}
\newcommand{\f}{\mathbf{f}}
\newcommand{\bmu}{\bm{\mu}}
\newcommand{\bsigma}{\bm{\sigma}}
\newcommand{\bTheta}{\bm{\Theta}}
\newcommand{\bSigma}{\bm{\Sigma}}
\newcommand{\w}{\bm{w}}
\newcommand{\rr}{\bm{r}}
\newcommand{\dd}{\mathbf{d}}
\newcommand{\balpha}{\bm{\alpha}}
\newcommand{\bbeta}{\bm{\beta}}
\newcommand{\g}{\mathbf{g}}
\newcommand{\e}{\mathbf{e}}
\newcommand{\bx}{\mathbf{X}}
\newcommand{\by}{\mathbf{Y}}
\newcommand{\bz}{\mathbf{Z}}
\newcommand{\be}{\mathbf{E}}
\newcommand{\bl}{\mathbf{L}}
\newcommand{\bs}{\mathbf{S}}
\newcommand{\bg}{\mathbf{G}}
\newcommand{\ba}{\mathbf{A}}
\newcommand{\bc}{\mathbf{C}}
\newcommand{\bb}{\mathbf{B}}
\newcommand{\bu}{\mathbf{U}}
\newcommand{\bq}{\mathbf{Q}}
\newcommand{\bv}{\mathbf{V}}
\newcommand{\bp}{\mathbf{P}}
\newcommand{\bw}{\mathbf{W}}
\newcommand{\bd}{\mathbf{D}}
\newcommand{\bi}{\mathbf{I}}
\newcommand{\bj}{\mathbf{J}}
\renewcommand{\u}{\mathbf{u}}
\renewcommand{\v}{\mathbf{v}}
\newcommand{\p}{\mathbf{p}}
\renewcommand{\b}{\mathbf{b}}
\newcommand{\bflambda}{\mathbf{\lambda}}
\newcommand{\A}{\mathcal{A}}
\newcommand{\X}{\mathbf{X}}
\newcommand{\W}{\mathbf{W}}
\newcommand{\sgn}{\mbox{sgn}}
\newcommand{\diag}{\mbox{diag}}
\newcommand{\armin}{\mbox{argmin}}
\newcommand{\rank}{\mbox{rank}}
\newcommand{\<}{\left\langle}
\renewcommand{\>}{\right\rangle}
\newcommand{\lbar}{\left\|}
\newcommand{\rbar}{\right\|}
\newcommand{\fan}[1]{\Vert #1 \Vert}
\newcommand{\qileft}{[\kern-0.15em[}
\newcommand{\qiLeft}{\left[\kern-0.4em\left[}
\newcommand{\qiright}{]\kern-0.15em]}
\newcommand{\qiRight}{\right]\kern-0.4em\right]}
\renewcommand{\algorithmicrequire}{\textbf{Input:}} 
\renewcommand{\algorithmicensure}{\textbf{Output:}} 
\newcommand{\eg}{\emph{e.g. }}
\newcommand{\ie}{\emph{i.e.}}
\newcommand{\wrt}{\emph{w.r.t. }}
\newcommand{\etal}{\emph{et.al. }}
\newcommand{\tw}{\tilde{\mathbf{w}}}
\newcommand{\tb}{\tilde{b}}
\newcommand{\tW}{\tilde{\mathbf{W}}}
\newcommand{\tB}{\tilde{\mathbf{b}}}

\maketitle

\begin{abstract}
	This paper presents privileged multi-label learning (PrML) to explore and exploit the relationship between labels in multi-label learning problems. We suggest that for each individual label, it cannot only be implicitly connected with other labels via the low-rank constraint over label predictors, but also its performance on examples can receive the explicit comments from other labels together acting as an \emph{Oracle teacher}. We generate privileged label feature for each example and its individual label, and then integrate it into the framework of low-rank based multi-label learning. The proposed algorithm can therefore comprehensively explore and exploit label relationships by inheriting all the merits of privileged information and low-rank constraints. We show that PrML can be efficiently solved by dual coordinate descent algorithm using iterative optimization strategy with cheap updates. Experiments on benchmark datasets show that through privileged label features, the performance can be significantly improved and PrML is superior to several competing methods in most cases.
\end{abstract}

\section{Introduction}
Different from single-label classification, multi-label learning (MLL) allows each example to own multiple and non-exclusive labels. For instance, when to post a photo taken in the scene of Rio Olympics on Instagram, Twitter or Facebook, we may simultaneously include hashtags as \#RioOlympics, \#athletes, \#medals and \#flags. Or a related news article can be simultaneously annotated as ``Sports", ``Politics" and ``Brazil". Multi-label learning aims to accurately allocate a group of labels to unseen examples with the knowledge harvested from the training data, and it has been widely-used in many applications, such as document categorization \cite{yang2009effective,li2015supervised}, image/videos classification/annotation \cite{Yang_2016_CVPR,Wang_2016_CVPR,bappy2016online}, gene function classification \cite{cesa2012synergy} and image retrieval \cite{Ranjan_2015_ICCV}.

 The most straightforward approach is 1-vs-all or Binary Relevance (BR) \cite{tsoumakas2010mining}, which decomposes the multi-label learning into a set of independent binary classification tasks. However, due to neglecting label relationships, only passable performance can be achieved. A number of methods have thus been developed for further improving the performance by taking label relationships into consideration, such as label ranking \cite{furnkranz2008multilabel},
  chains of binary classification \cite{read2011classifier}, ensemble of multi-class classification \cite{tsoumakas2011random} and label-specific features \cite{zhang2015lift}. Recently, embedding-based methods have emerged as a mainstream solution of the multi-label learning problem. The approaches assume that the label matrix is low-rank, and adopt different manipulations to embed the original label vectors, such as compressed sensing \cite{hsu2009multi}, principal component analysis \cite{tai2012multilabel}, canonical correlation analysis \cite{zhang2011multi}, landmark selection \cite{balasubramanian2012landmark} and manifold deduction \cite{bhatia2015sparse,ML2}.

 Most of low-rank based multi-label learning algorithms exploit label relationships in the hypothesis space. The hypotheses of different labels are interacted with each other under the low-rank constraint, which is as an implicit use of label relationships. By contrast, multiple labels can help each other in a more explicit way, where the hypothesis of a label is not only evaluated by the label itself, but also can be assessed by the other labels. More specifically in multi-label learning, for the label hypothesis at hand, the other labels can together act as an \textit{Oracle teacher} to provide some \textit{comments} on its performance, which is then beneficial for updating the learner. Multiple labels of examples can only be accessed in the training stage instead of the testing stage, and then Oracle teachers only exist in the training stage. This \textit{privileged} setting has been studied in LUPI  (learning using privileged information) paradigm \cite{vapnik2009learning,vapnik2009new,vapnik2015learning} and it has been reported that appropriate privileged information can boost the performance in ranking \cite{sharmanska2013learning}, metric learning \cite{fouad2013incorporating}, classification \cite{pechyony2010theory} and visual recognition \cite{Motiian_2016_CVPR}.

In this paper, we bridge connections between labels through privileged label information and then formulate an effective privileged multi-label learning (PrML) method. For each label, each example's privileged label feature can be generated from other labels. Then it is able to provide additional guidances on the learning of this label, given the underlying connections between labels. By integrating the privileged information into the low-rank based multi-label learning, each label predictor learned from the resulting model not only interacts with other labels via their predictors, but also receives explicit comments from these labels. Iterative optimization strategy is employed to solve PrML, and we theoretically show that each subproblem can be solved by dual coordinate descent algorithm with the guarantee of solution's uniqueness. Experimental results demonstrate the significance of exploiting the privileged label features and the effectiveness of the proposed algorithm.

\section{Problem Formulation}
In this section we elaborate the intrinsic privileged information in multi-label learning and formulate the corresponding privileged multi-label learning (PrML) as well.


We first introduce multi-label learning (MLL) problem and its frequent notations. Given $n$ training points, we denote the whole data set as $\mathcal{D}=  \{(\x_1,\y_1),...,(\x_n,\y_n)\}$, where $\x_i\in\mathcal{X}\subseteq\mathbb{R}^d$ is the input feature vector and $\y_i\in\mathcal{Y}\subseteq\{-1,1\}^L$ is the corresponding label vector with the label size $L$. Let $X=[\x_1,\x_2,...,\x_n]\in\mathbb{R}^{d\times n}$ be the data matrix and $Y=[\y_1,\y_2,...,\y_n]\in\{-1,1\}^{L\times n}$ be the label matrix. Specifically, $Y_{ij}=1$ if and only if the $i$-th label is assigned to the example $\x_j$ and $Y_{ij} = -1$ otherwise. Given the dataset $\mathcal{D}$, multi-label learning is formulated as learning a mapping function $f: \mathbb{R}^d \rightarrow \{-1,1\}^L$ that can accurately predict labels for unseen test points.

\subsection{Low-rank multi-label embedding}
A straightforward manner to parameterize the decision function is using linear classifiers, \ie~$f(\x) = Z^T\x = [\z_1,...,\z_L]^T\x$ where $Z\in\mathbb{R}^{d\times L}$. Note that the linear form is actually incorporated with the bias term by augmenting an additional 1 to the feature vector $\x$. Binary Relevance (BR) method \cite{tsoumakas2010mining} decomposes
multi-label learning into a set of single-label learning problems. The binary classifier for each label can be obtained by the  widely-used SVM method:
\begin{equation}\label{BR}
\begin{split}
\displaystyle{\min_{\z_i=[\z^*_i;b_i],\bm{\xi}_{i}}} &\quad \displaystyle{\dfrac{1}{2}\norm{\z_i^*}_2^2  + C\sum_{j=1}^n} \xi_{ij}\\
\mbox{s.t.} &\quad Y_{ij}(\langle \z_i^*,\x_j\rangle+b_i)\geq 1- \xi_{ij} \\
&\quad \xi_{ij} \geq 0, \forall  j=1,...,n,
\end{split}
\end{equation}
where $\bm{\xi}_i = [\xi_{i1},...,\xi_{in}]^T$ is slack variable and $\langle\cdot\rangle$ is the inner product between two vectors or matrices. Predictors $\{\z_1,...,\z_L\}$ of different labels are thus independently solved without considering relationships between labels, which limits the classification performance of BR method.

Some labels can be closely connected and used to occur together on examples, and thus the label matrix is often supposed to be low-rank, which leads to the low rank of label predictor matrix $Z=[\z_1,...,\z_L]$ as a result. Considering the rank of $Z$ as $k$, which is smaller than $d$ and $L$, we are able to employ two smaller matrices to approximate $Z$, \ie~$Z=D^TW$. $D\in\mathbb{R}^{k\times d}$ can be seen as a dictionary of hypotheses in latent space $\mathbb{R}^k$, while each $\w_i$ in
$W=[\w_1,...,\w_L]\in\mathbb{R}^{k\times L}$ is the coefficient vector to generate the predictor of $i$-th label from the hypothesis dictionary $D$. Each classifier $\z_i$ is represented as $\z_i = D^T \w_i~(i =1,2,...,L)$ and Problem \eqref{BR} can be extended into:
\begin{equation}\label{lowrank}
\begin{split}
\min_{D,W,\xi} &\quad \dfrac{1}{2}(\norm{D}_F^2 + \sum_{i=1}^L\norm{\w_i}_2^2)  + C\sum_{i=1}^L\sum_{j=1}^n \xi_{ij}\\
\mbox{s.t.} &\quad Y_{ij}(\langle D^T\w_i,\x_j\rangle)\geq 1- \xi_{ij} \\
&\quad \xi_{ij} \geq 0, \forall  i=1,...,L; j=1,...,n,
\end{split}
\end{equation}
where $\xi=[\bm{\xi}_1,...,\bm{\xi}_L]^T$. Thus in Eq.\eqref{lowrank}, the classifiers of all labels $\z_i$ are drawn from an identical low-dimensional subspace, \ie~the row space of $D$. Then using block coordinate descent, either $D$ or $W$ can be solved within the empirical risk minimization (ERM) framework by turning it into a hinge loss minimization problem.



\subsection{Privileged information in multi-label learning}
The slack variable $\xi_{ij}$ in Eq.\eqref{lowrank} indicates the prediction error of the $j$-th example on the $i$-th label. In fact, it depicts the error-tolerant ability of a model, and is directly related to the optimal classifier and its classification performance. From a different point of view, slack variables can be regarded as \emph{comments} of some \emph{Oracle Teacher} on the performance of predictors on each example. In multi-label context for each label, its hypothesis is not only evaluated by itself, but also assessed by the other labels. Thus other labels can be seen as its Oracle teacher, who will provide some comments during this label's learning. Note that these label values are known as a priori only during training; when we get down to learning the $i$-th label's predictor, we actually know the values of other labels for each training point $\x_j$. Therefore, we can formulate the other label values as privileged information (or hidden information) of each example. Let
\begin{equation}\label{pri}
\tilde{\y}_{i,j} \stackrel{\vartriangle}{=} \y_j,~ \mbox{with} ~i\mbox{-th element being 0}.
\end{equation}
We call $\tilde{\y}_{i,j}$ the training point $\x_j$'s \textit{privileged label feature} on the $i$-th label. It can be seen that the privileged label space is constructed straightforwardly from the original label space. These privileged label features can thus be regarded as an explicit way to connect all labels. In addition, note that the \textit{valid} dimension (removing 0) of $\tilde{\y}_{i,j}$ is $L-1$, since we take the other $L-1$ label values as the privileged label features. Moreover, not all the other labels have the positive impact on the learning of some label \cite{sun2014multi}, and thus it is appropriate to strategically select some key labels to formulate the privileged label features. We will discuss this in the Experiment section.

Since for each label, the other labels serve as the Oracle teacher via the privileged label feature $\tilde{\y}_{i,j}$ on each example, the comments on slack variables can be modelled as a linear function \cite{vapnik2009new},
\begin{equation}\label{slack}
\xi_{ij}(\tilde{\y}_{i,j};\tilde{\w}_i) = \langle\tilde{\w}_i,\tilde{\y}_{i,j}\rangle.
\end{equation}
The function $\xi_{ij}(\tilde{\y}_{i,j};\tilde{\w}_i)$ is thus called \textit{correcting function} with respect to the $i$-th label, where $\tilde{\w}_i$ is the parameter vector. As shown in Eq.\eqref{slack}, the privileged comments $\tilde{\y}_{i,j}$ directly correct the values of slack variables as the prior knowledge or the additional information. Integrating privileged features as Eq.\eqref{slack} into the SVM stimulates the popular SVM+ method \cite{vapnik2009new}, which has been proved to improve the convergence rate and the performance.



Integrating the proposed privileged label features into the low-rank parameter structure as Eqs.\eqref{lowrank} and \eqref{slack}, we formulate a new multi-label learning model, privileged multi-label learning (PrML) by casting it into the  SVM+-based LUPI paradigm,
\begin{equation}\label{PrML}
\begin{array}{rl}
\displaystyle{\min_{D,W,\tilde{W}}} & \displaystyle{\dfrac{1}{2}\norm{D}_F^2 + \dfrac{1}{2}\sum_{i=1}^L(\gamma_1\norm{\w_i}_2^2 + \gamma_2\norm{\tilde{\w}_i}_2^2) }
	\displaystyle{+~C\sum_{i=1}^L\sum_{j=1}^n \langle \tilde{\w}_i, \tilde{\y}_{i,j}\rangle}\\
\mbox{s.t.} & Y_{ij}\langle D^T\w_i,\x_j\rangle\geq 1-  \langle \tilde{\w}_i, \tilde{\y}_{i,j}\rangle\\
& \langle \tilde{\w}_i, \tilde{\y}_{i,j}\rangle\geq 0, \forall i=1,...,L; j=1,...,n,
\end{array}
\end{equation}
where $\tilde{W}=[\tilde{\w}_1,...,\tilde{\w}_L]$. Particularly, we absorb the bias term to obtain a compact variant of the original SVM+, because it is turned out to have a simpler form in the dual space and can be solved more efficiently. In this way, the training data within multi-label learning is actually in the triplet fashion, \ie~$(\x_i,\y_i,\tilde{Y}_i), i=1,...,n$, where $\tilde{Y}_i = [\tilde{\y}_{1,i},...,\tilde{\y}_{L,i}]$ is the privileged label feature matrix for each label.


\textbf{Remark.} When $W = I$, \ie~the low-dimensional projection is identical, the proposed PrML degenerates into a simpler BR-style model (we call it privileged Binary Relevance, PrBR), where the whole model decomposes into $L$ independent binary models. However, every single model is still combined with the comments form privileged information, thus it may still be superior to BR.


\section{Optimization}
In this section, we present how to solve the proposed privileged multi-label learning algorithm Eq.(\ref{PrML}). The whole model of Eq.(\ref{PrML}) is not convex due to the multiplication of $D^T\w_i$ in constraints. However, each subproblem with fixed $D$ or $W$ is convex, thus it can be solved by various efficient convex solvers. Note that $\langle D^T\w_i,\x_j\rangle$ has two equivalent forms, \ie~$\langle \w_i,D\x_j\rangle$ and $\langle D, \w_i\x_j^T\rangle$, and thus the correcting function can be coupled with $D$ or $W$, without damaging the convexity of either subproblem. In this way, Eq.(\ref{PrML}) can be solved using the alternative iteration strategy, \ie~iteratively conducting the following two steps: optimizing $W$ and privileged variable $\tilde{W}$ with fixed $D$, and updating $D$ and privileged variable $\tilde{W}$ with fixed $W$. Both subproblems are related to SVM+, inducing their dual problems to be quadratic programming (QP). In the following, we elaborate the solving process in real implementations.

\subsection{Optimizing $W, \tilde{W}$ with fixed $D$}
Fixing $D$, Eq.(\ref{PrML}) can be decomposed into $L$ independent binary classification problems, each of which regards the variable pair $(\w_i,  \tilde{\w}_i)$. Parallel techniques or multi-core computation can thus be employed to speed up the training process. In specific, the optimization problem with respect to $(\w_i,  \tilde{\w}_i)$ is
\begin{equation}\label{w_i}
\begin{split}
\min_{\w_i,\tilde{\w}_i} & ~\dfrac{1}{2}(\gamma_1\norm{\w_i}_2^2 + \gamma_2\norm{\tilde{\w}_i}_2^2)  + C\sum_{j=1}^n \langle \tilde{\w}_i, \tilde{\y}_{i,j}\rangle\\
\mbox{s.t.} & ~Y_{ij}\langle \w_i,D\x_j\rangle\geq 1- \langle \tilde{\w}_i, \tilde{\y}_{i,j}\rangle \\
& \langle \tilde{\w}_i, \tilde{\y}_{i,j}\rangle \geq 0, \forall  j=1,...,n.
\end{split}
\end{equation}
and its dual form is (see supplementary materials)
\begin{equation}\label{dual_w_i}
\begin{split}
\max_{\balpha,\bbeta}~& -\frac{1}{2} (\balpha\circ\y_i^*)^TK_D(\balpha\circ\y_i^*)+\bm{1}^T\balpha  - \frac{1}{2\gamma}(\balpha+\bbeta-C\bm{1})^T\tilde{K}_i (\balpha+\bbeta-C\bm{1})
\end{split}
\end{equation}
with the parameter update $\gamma \leftarrow \gamma_2/\gamma_1, C\leftarrow C/\gamma_1$ and
the constraints $\balpha\succeq0,\bbeta\succeq0$, \ie~$\alpha_j\geq0,\beta_j\geq0,\forall j\in [1:n].$ Moreover, $\y_i^* = [Y_{i1},Y_{i2},...,Y_{in}]^T$ is the label-wise vectors for the $i$-th label. $\circ$ is the Hadamard (element-wise) product of two vectors or matrices. $K_D\in\mathbb{R}^{n\times n}$ is the $D$-based features' inner product (kernel) matrix with $K_D(j,q) = \langle D\x_j,D\x_q\rangle$. $\tilde{K}_i $ is the privileged label features' inner product (kernel) matrix with respect to the $i$-th label, where $\tilde{K}_i(j,q) = \langle\tilde{\y}_{i,j}, \tilde{\y}_{i,q} \rangle$. $\bm{1}$ is the vector with all ones.

\cite{pechyony2010smo} proposed an SMO-style algorithm (gSMO) for SVM+ problem. However, because of the bias term, the Lagrange multipliers  are tangled together in the dual problem, which leads to a more complicated constraint set
$$\{(\balpha,\bbeta)|\balpha^T\y_i^* = 0, \bm{1}^T(\balpha+\bbeta-C\bm{1}) = 0,\balpha\succeq0, \bbeta\succeq0\}$$
than $\{(\balpha,\bbeta)|\balpha\succeq0, \bbeta\succeq0\}$ in our PrML. Hence by absorbing the bias term, Eq.\eqref{w_i} can produce a more compact dual problem only with non-negative constraint. Coordinate descent (CD) \footnote{We optimize an equivalent ``min" problem instead of the original ``max" one.} algorithm can be applied to solve the dual problem, and a closed-form solution can be obtained in each iteration step \cite{li2016fast}. After solving the Eq.(\ref{dual_w_i}), according to the Karush-Kuhn-Tucker (KKT) conditions, the optimal solution for the primal problem (\ref{w_i}) can be expressed by the Lagrange multipliers:
\begin{align}
\w_i &= \sum_{j=1}^n \alpha_jY_{ij}D\x_j \label{wi1}\\
\tilde{\w}_i &= \frac{1}{\gamma}\sum_{j=1}^n (\alpha_j+\beta_j-C)\tilde{\y}_{i,j} \label{w_i_expression}
\end{align}

\subsection{Optimizing $D, \tilde{W}$ with fixed $W$}
Given fixed coefficient matrix $W$, we update and learn the linear transformation $D$ with the help of comments provided by privileged information. Thus the problem (\ref{PrML}) for $(D,\tilde{W})$ is reduced to
\begin{equation}\label{D}
\begin{split}
\displaystyle{\min_{D,\tilde{W}}}~ & \displaystyle{\dfrac{1}{2}\norm{D}_F^2 + \dfrac{\gamma_2}{2}\sum_{i=1}^L\norm{\tilde{\w}_i}_2^2  + C\sum_{i=1}^L\sum_{j=1}^n} \langle \tilde{\w}_i, \tilde{\y}_{i,j}\rangle\\
\mbox{s.t.} ~& Y_{ij}\langle D, \w_i\x_j^T\rangle\geq 1- \langle \tilde{\w}_i, \tilde{\y}_{i,j}\rangle \\
& \langle \tilde{\w}_i, \tilde{\y}_{i,j}\rangle \geq 0, \forall i=1,...,L;  j=1,...,n.
\end{split}
\end{equation}
Eq.(\ref{D}) has $Ln$ constraints, each of which can be indexed with a two-dimensional subscript $[i,j]$. The Lagrange multipliers of Eq.(\ref{D}) are thus two-dimensional as well. To make the dual problem of Eq.(\ref{D}) consistent with Eq.(\ref{dual_w_i}), we define a bijection $\phi: [1:L]\times[1:n] \rightarrow [1:Ln]$ as the row-based vectorization index mapping, \ie~$\phi([i,j]) = (i-1)n+j$. In a nutshell, we arrange the constraints (also the multipliers) according to the order of row-based vectorization. In this way, the corresponding dual problem of Eq.(\ref{D}) is formulated as (see details in supplementary materials)
\begin{equation}\label{dual_D}
\begin{split}
\max_{\balpha\succeq0,\bbeta\succeq0}~& -\frac{1}{2} (\balpha\circ\y^*)^TK_W(\balpha\circ\y^*)+\bm{1}^T\balpha  - \frac{1}{2\gamma_2}(\balpha+\bbeta-C\bm{1})^T\tilde{K}(\balpha+\bbeta-C\bm{1})
\end{split}
\end{equation}
where $\y^* = [\y^*_1;\y^*_2;...;\y^*_L]$ is the row-based vectorization of $Y$ and $\tilde{K}=diag(\tilde{K}_1,\tilde{K}_2,...,\tilde{K}_L)$ is a block diagonal matrix, which corresponds to the kernel matrix of privileged label features. $K_W$ is the kernel matrix of input features with every element $K_W(s,t) =\langle G_{\phi^{-1}(s)}, G_{\phi^{-1}(t)}\rangle$, where $G_{ij} = \w_i\x_j^T$. Based on the KKT conditions, $(D,\tilde{W})$ can be constructed using $(\balpha,\bbeta)$:
\begin{align}
D &= \sum_{s=1}^{Ln}\alpha_sy^*_sG_{\phi^{-1}(s)}\label{D1}\\
\tilde{\w}_i & = \frac{1}{\gamma_2}\sum_{j=1}^n (\alpha_{\phi([i,j])} + \beta_{\phi([i,j])} - C) \tilde{\y}_{i,j} \label{w_i_expression_2}
\end{align}
In this way, Eq.(\ref{dual_D}) has an identical optimization form with Eq.(\ref{dual_w_i}). Thus we can also turn it to the fast CD method \cite{li2016fast}. However, due to the script index mapping, directly using the method proposed in \cite{li2016fast} is very expensive. Considering the privileged kernel matrix $\tilde{K}$ is block sparse, we can further speed up the calculation. Details of the modified version of dual CD algorithm for solving Eq.\eqref{dual_D} are presented in Algorithm \ref{alg:D}. Also note that one primary merit of this algorithm is the free calculation of the whole kernel matrix. Instead, we only need to calculate its diagonal elements as line 2 in Algorithm \ref{alg:D}.

\begin{algorithm}[tb]
	\caption{A dual coordinate descent algorithm for solving Eq.(11)}
	\label{alg:D}
	\begin{algorithmic}[1]
		\REQUIRE{Training data: feature matrix $X=[\x_1,\x_2,...,\x_n]\in\mathbb{R}^{d\times n}$, label matrix $Y=[\y_1,\y_2,...,\y_n]\in\{-1,1\}^{L\times n}$. Privileged label features: $\tilde{\y}_{i,j}$ of $i$-th label and $j$-th example. Low-dimensional embedding projection: $W=[\w_1,...,\w_L]\in\mathbb{R}^{k\times L}$. Learning parameters: $\gamma,C>0$.}
		\STATE Define the bijection $\phi: [1:L]\times[1:n] \rightarrow [1:Ln]$ as the row-based vectorization index mapping, \ie~$\phi([i,j]) = (i-1)n+j$. Denote its inverse as $\phi^{-1}$.
		\STATE Constuct $Q\in\mathbb{R}^{2Ln}$ for each $s\in[1:2Ln]$: for $1\leq s\leq Ln$, $[i,j]\leftarrow\phi^{-1}(s)$, $Q_s = \norm{\w_i}_2^2\norm{\x_j}^2_2 + 1/\gamma\norm{\tilde{\y}_{i,j}}_2^2$; for $Ln+1\leq s\leq 2Ln$, $[i,j]\leftarrow\phi^{-1}(s-Ln)$, $Q_s = 1/\gamma\norm{\tilde{\y}_{i,j}}_2^2$.
		\STATE Initialization: $\bm{\eta}=[\balpha;\bbeta]\leftarrow \bm{0}$, $D\leftarrow \bm{0}$ and $\tilde{\w}_i  \leftarrow -\frac{C}{\gamma}\sum_{j=1}^n\tilde{\y}_{i,j}$ for each $i\in[1:L]$.
		\WHILE{not convergence}
		\STATE randomly pick an index $s$
		\IF{$1\leq s\leq Ln$}
		\STATE $[i,j]\leftarrow\phi^{-1}(s)$
		\STATE $\nabla_s \leftarrow Y_{ij}\w_i^TD\x_j - 1 + \tilde{\w}_i^T\tilde{\y}_{i,j}$
		\STATE $\delta \leftarrow \max\{-\eta_s,-\nabla_s/Q_s\}$
		\STATE $D\leftarrow D + \delta Y_{ij}\w_i\x_j^T$
		\ELSIF{$Ln+1\leq s\leq 2Ln$}
		\STATE $[i,j]\leftarrow\phi^{-1}(s-Ln)$
		\STATE $\nabla_s \leftarrow \tilde{\w}_i^T\tilde{\y}_{i,j}$
		\STATE $\delta \leftarrow \max\{-\eta_s,-\nabla_s/Q_s\}$
		\ENDIF
		\STATE $\tilde{\w}_i \leftarrow \tilde{\w}_i + (\delta/\gamma)\tilde{\y}_{i,j}$
		\STATE $\eta_s \leftarrow \eta_s + \delta$
		\ENDWHILE
		\ENSURE{Dictionary $D$ and correcting functions $\{\tilde{\w}_i\}_{i=1}^L$.}
	\end{algorithmic}
\end{algorithm}

\begin{algorithm}[tb]
	\caption{Privileged Multi-label Learning (PrML)}
	\label{alg}
	\begin{algorithmic}[1]
		\REQUIRE{Training data: feature matrix $X=[\x_1,\x_2,...,\x_n]\in\mathbb{R}^{d\times n}$, label matrix $Y=[\y_1,\y_2,...,\y_n]\in\{-1,1\}^{L\times n}$. Learning parameters: $\gamma_1,\gamma_2,C\geq0$.}
		\STATE Construction of privileged label features for each label and each training point, \eg as Eq.(\ref{pri}).
		\STATE initialization of $D$
		\WHILE{not convergence}
		\FOR{each $i\in[1:L]$}
		\STATE  $[\balpha,\bbeta] \leftarrow $ solving Eq.(\ref{dual_w_i})
		\STATE update $\w_i,\tilde{\w}_i$ according to Eq.(\ref{wi1}) and Eq.(\ref{w_i_expression})
		\ENDFOR
		\STATE $[\balpha,\bbeta] \leftarrow $ solving Eq.(\ref{dual_D})
		\STATE update $D,\tilde{W}$ according to Eq.(\ref{D1}) and Eq.(\ref{w_i_expression_2})
		\ENDWHILE
		\ENSURE{A linear multi-label classifier $Z = D^TW$, together with a correcting function $\tilde{W}$ \wrt $L$ labels.}
	\end{algorithmic}
\end{algorithm}

\subsection{Framework of PrML}
Our proposed privileged multi-label learning is summarized in Algorithm \ref{alg}. As indicated in Algorithm \ref{alg}, both $D$ and $W$ are updated with the help of comments from privileged information. Note that the primal variables and dual variables are connected with KKT connections, and thus in real applications lines 5-6 and 8-9 in Algorithm \ref{alg} can be implemented iteratively. Since each subproblem is actually a linear SVM+ optimization and solved by the CD method, its convergence is consistent with that of the dual CD algorithm for linear SVM \cite{hsieh2008dual}. Due to the cheap updates, \cite{hsieh2008dual,li2016fast} empirically showed it can be much faster than GMO-style methods and many other convex solvers when $d$ (number of features) is large. Moreover, the independence of labels in Problem \eqref{w_i} enables to use parallel techniques and multicore computation to accommodate the large $L$ (number of labels). As for a large $n$ (number of examples) (also large $L$ for Problem \eqref{D}), we can use mini-batch CD method \cite{takac2015distributed}
, where each time a batch of examples are selected and CD updates are parallelly applied to them, \ie~lines 5-17 can be implemented parallelly. Also recently \cite{kdd2016} designed a framework for parallel CD and achieved significant speeding up even when the $d$ and $n$ are very large. Thus, our model can scale to $d$, $L$ and $n$. In addition, the solution for each of subproblem is also unique, as Theorem \ref{theorem1} stated.
\begin{theorem}\label{theorem1}
	The solution to the problem (\ref{w_i}) or (\ref{D}) is unique for any $\gamma_1>0,\gamma_2>0,C>0$.
\end{theorem}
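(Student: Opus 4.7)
The plan is to derive uniqueness from strict convexity of the primal objective over a convex feasible polyhedron. I focus first on Problem~\eqref{w_i}; Problem~\eqref{D} follows by the same template applied to the larger variable pair $(D,\tilde W)$.

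Step one is to observe that the objective
\[
F(\w_i,\tilde{\w}_i) = \tfrac{\gamma_1}{2}\norm{\w_i}_2^2 + \tfrac{\gamma_2}{2}\norm{\tilde{\w}_i}_2^2 + C\sum_{j=1}^n \langle \tilde{\w}_i, \tilde{\y}_{i,j}\rangle
\]
is a strongly convex quadratic plus an affine term: its Hessian equals $\mathrm{diag}(\gamma_1 I_k,\gamma_2 I_{L-1})\succ 0$ as soon as $\gamma_1,\gamma_2>0$, and the remaining summation contributes only a linear functional of $\tilde{\w}_i$. The feasible set, being an intersection of finitely many closed half-spaces in $(\w_i,\tilde{\w}_i)$, is a closed convex polyhedron. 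Provided it is nonempty, coercivity of $F$ (the quadratic part dominates the linear term as $\norm{\w_i}_2^2+\norm{\tilde{\w}_i}_2^2\to\infty$) together with continuity and closedness yields a minimizer; uniqueness then follows from the standard two-line argument — if $z_1\ne z_2$ were distinct optimizers with common value $F^\star$, the midpoint $\bar z=(z_1+z_2)/2$ would be feasible by convexity and satisfy $F(\bar z)<F^\star$ by strict convexity, a contradiction.

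For Problem~\eqref{D} the argument repeats verbatim with variables $(D,\tilde W)$: the quadratic part $\tfrac{1}{2}\norm{D}_F^2+\tfrac{\gamma_2}{2}\sum_i\norm{\tilde{\w}_i}_2^2$ has positive-definite Hessian once $\gamma_2>0$, the remaining terms are affine in $\tilde W$, and the feasible set is again a closed convex polyhedron on which $F$ is coercive. The main obstacle I anticipate is not strict convexity itself but verifying nonemptiness of the feasible set cleanly, i.e.\ producing a feasible pair for every configuration of the fixed variables. This reduces to a Farkas-type statement on the privileged-label feature matrix and can either be settled by exhibiting an explicit feasible point (for instance by scaling $\w_i$ or $D$ so that the induced data becomes separable, rendering $\tilde{\w}_i=0$ admissible) or absorbed into a standing well-posedness hypothesis on the training instance. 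With feasibility in hand, the strict-convexity argument above closes the proof for both subproblems.
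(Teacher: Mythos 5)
Your proof is correct and takes essentially the same route as the paper: uniqueness follows from strict convexity of the quadratic-plus-linear objective in $(\w_i,\tilde{\w}_i)$ (resp. $(D,\tilde{W})$) over the convex feasible set, which the paper realizes by showing that along the segment joining two putative optima one has $g''(t)=\norm{\w_2-\w_1}_2^2+\gamma\norm{\tilde{\w}_2-\tilde{\w}_1}_2^2=0$ --- an equivalent formulation of your midpoint argument. Your extra concern about nonemptiness of the feasible set (existence) is not required for the stated claim, which asserts only uniqueness, and the paper likewise does not address it.
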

\begin{proof}[Proof skeleton]
	Both Eq.(6) and Eq.(10) can be cast into an identical SVM+ optimization with the form of objective function being $F = \frac{1}{2}\norm{\w}_2^2 + \frac{\gamma}{2}\norm{\tilde{\w}}_2^2 + C\sum_{j=1}^n \langle\tilde{\w},\tilde{\x}_j\rangle$ and a closed convex feasible solution set. Denote $\u: = (\w;\tilde{\w})$ and assume two optimal solutions $\u_1, \u_2$, we have $F(\u_1)=F(\u_2)$. Let $\u_t = (1-t)\u_1 + t\u_2, t\in[0,1]$, then $F(\u_t)\leq(1-t)F(\u_1)+tF(\u_2) = F(\u_1)$, thus $F(\u_t)=F(\u_1)=F(\u_2)$ for all $t\in[0,1]$, which implies that $g(t) = F(\u_t)-F(\u_1)\equiv 0, \forall t\in[0,1]$. Moreover, we have
	\begin{align*}
	g(t)=&\frac{1}{2} t^2 \norm{\w_2-\w_1}_2^2 + t\langle\w_2-\w_1,\w_1\rangle +  \frac{\gamma}{2} t^2 \norm{\tilde{\w}_2-\tilde{\w}_1}_2^2+ t\langle\tilde{\w}_2-\tilde{\w}_1,\tilde{\w}_1\rangle + tC\sum_{i=1}^n \langle \tilde{\w}_2-\tilde{\w}_1,\tilde{\x}_i\rangle\\
	g''(t) = &\norm{\w_2-\w_1}_2^2 + \gamma\norm{\tilde{\w}_2-\tilde{\w}_1}_2^2 = 0.
	\end{align*}
	For $\gamma>0$, then we have $\w_1=\w_2$ and $\tilde{\w}_1=\tilde{\w}_2$.
\end{proof}

Proof of Theorem \ref{theorem1} mainly lies in the strict convexity of the objective function in either Eq.\eqref{w_i} or \eqref{D}. Concrete details are referred to the supplementary materials. In this way, the correcting function $\tilde{W}$ serves as a bridge to channel the $D$ and $W$, and the convergence of $\tilde{W}$ infers the convergence of $D$ and $W$. Thus we can take $\tilde{W}$ as the barometer of the whole algorithm's convergence.

\begin{table}[t]
	\footnotesize
	\centering
	\caption{Data statistics. $n$ is the total number of examples. $d$ and $L$ are the number of features and labels, respectively; $\bar{L}$ and Den($L$) are the average number of positive labels in an instance and the label density, respectively. `Type' means feature type. }\label{data}
	\begin{tabular}{l|r|r|r|r|c|c}
		Dataset &n &$d$ & $L$ & $\bar{L}$ & Den($L$)&type \\ \hline
		enron &1702 &1001 & 53 & 3.378 &0.064&nominal\\
		yeast &2417 &103 & 14 & 4.237&0.303&numeric \\
		corel5k &5000 &499 & 374  &3.522&0.009&nominal\\
		bibtex & 7395&1836 & 159 &  2.402&0.015&nominal \\
		eurlex & 19348 &5000&3993&5.310& 0.001 & nominal\\
		mediamill &43907 &120 & 101 & 4.376&0.043&numeric \\	\hline
	\end{tabular}
\end{table}

\begin{figure*}[t]
	\centering
	\includegraphics[width=\linewidth]{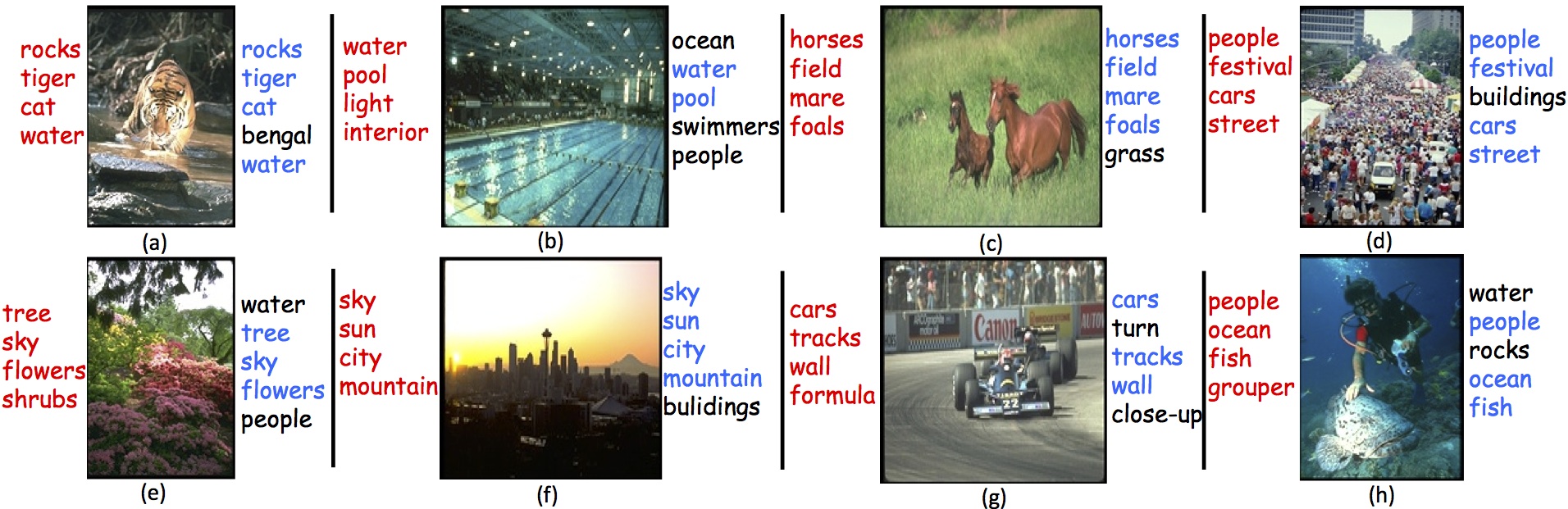}
	\caption{Some of the image annotation results of the proposed PrML on the benchmark corel5k dataset. Tags: red, ground-truth; blue, correct predictions; black, wrong predictions. }
	\label{fig:images}
\end{figure*}

\section{Experimental Results}
In this section, we conduct various experiments on benchmark datasets to validate the effectiveness of using the intrinsic privileged information for multi-label learning. In addition, we also investigate the performance and superiority of the proposed PrML model comparing to recent competing multi-label methods.

\subsection{Experiment configuration}
\textbf{Datasets.} We select six benchmark multi-label datasets, including enron, yeast, corel5k, bibtex, eurlex and mediamill. Specially, we consider the cases when $d$ (eurlex), $L$ (eurlex) and $n$ (corel5k, bibtex, eurlex \& mediamill) are large respectively. Also note that enron, corel5k, bibtex and eurlex are of sparse features. See Table \ref{data} for the details of these datasets.

\textbf{Comparison approaches.} \\
1). BR (binary relevance) \cite{tsoumakas2010mining}. A SVM is trained with respect to each label. \\
2). ECC (ensembles of classifier chains) \cite{read2011classifier}. It turns ML into a series of binary classification problems. \\
3). RAKEL (random k-labelsets) \cite{tsoumakas2011random}. It transforms ML into an ensemble of multi-class classification problems. \\
4). LEML (low rank empirical risk minimization for multi-label learning) \cite{yu2014large}. It is a low-rank embedding approach
which is casted into ERM framework.\\
5). ML$^2$ (multi-label manifold learning) \cite{ML2}. It is a latest multi-label learning method, which is based on the manifold assumption in label space.

\textbf{Evaluation Metrics.} We use six prevalent metrics to evaluate the performance of all methods, including Hamming loss, One-error, Coverage, Ranking loss, Average precision (Aver precision) and Macro-averaging AUC (Mac AUC).  Note that all evaluation metrics have the value range [0,1]. In addition, for the first four metrics, the smaller values would indicate the better classification performance and we use $\downarrow$ to index this positive logic. On the contrary, for the last two metrics larger values represent the better performance, indexed by $\uparrow$.

\begin{figure}[t]
	\centering
	\subfloat[Hamming loss $\downarrow$]
	{\includegraphics[width=0.32\columnwidth]{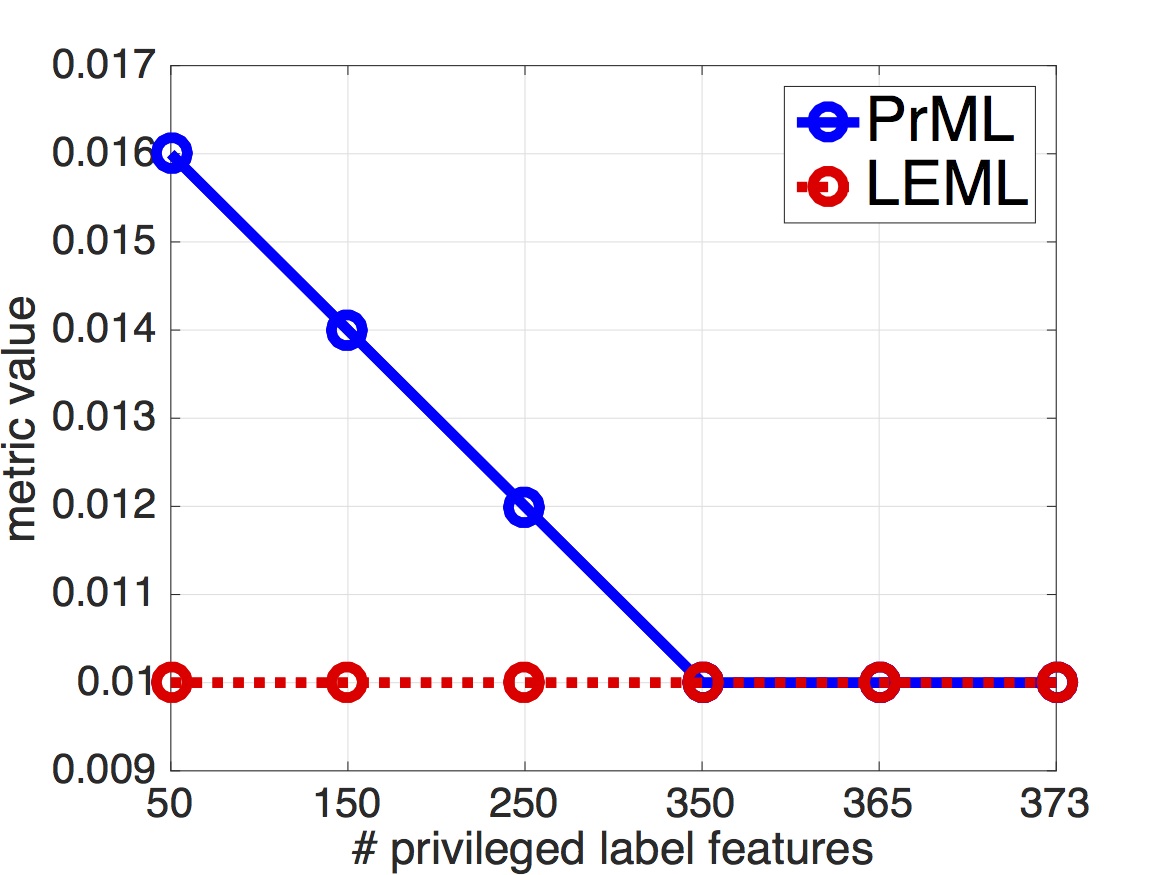}}
	\ 
	\subfloat[One-error $\downarrow$]
	{\includegraphics[width=0.32\columnwidth]{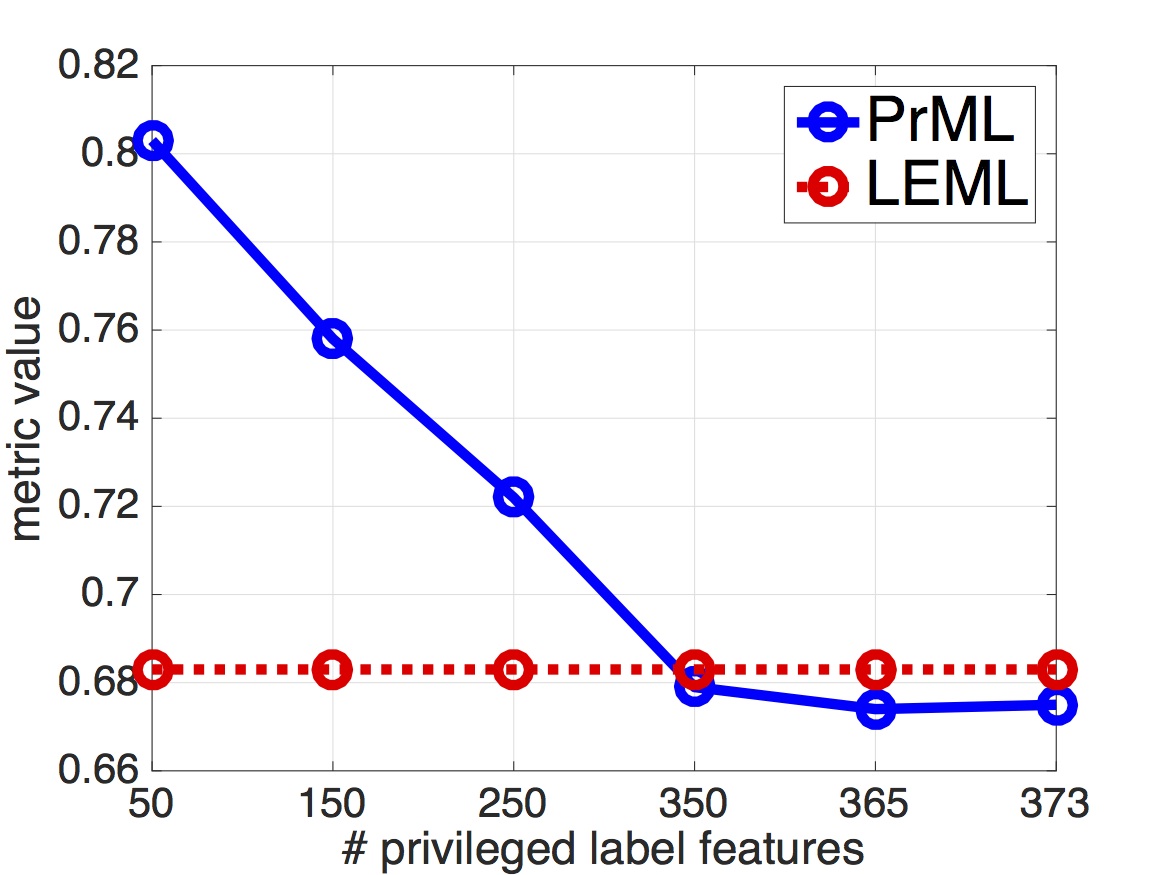}}
	\ 
	\subfloat[Coverage $\downarrow$]
	{\includegraphics[width=0.32\columnwidth]{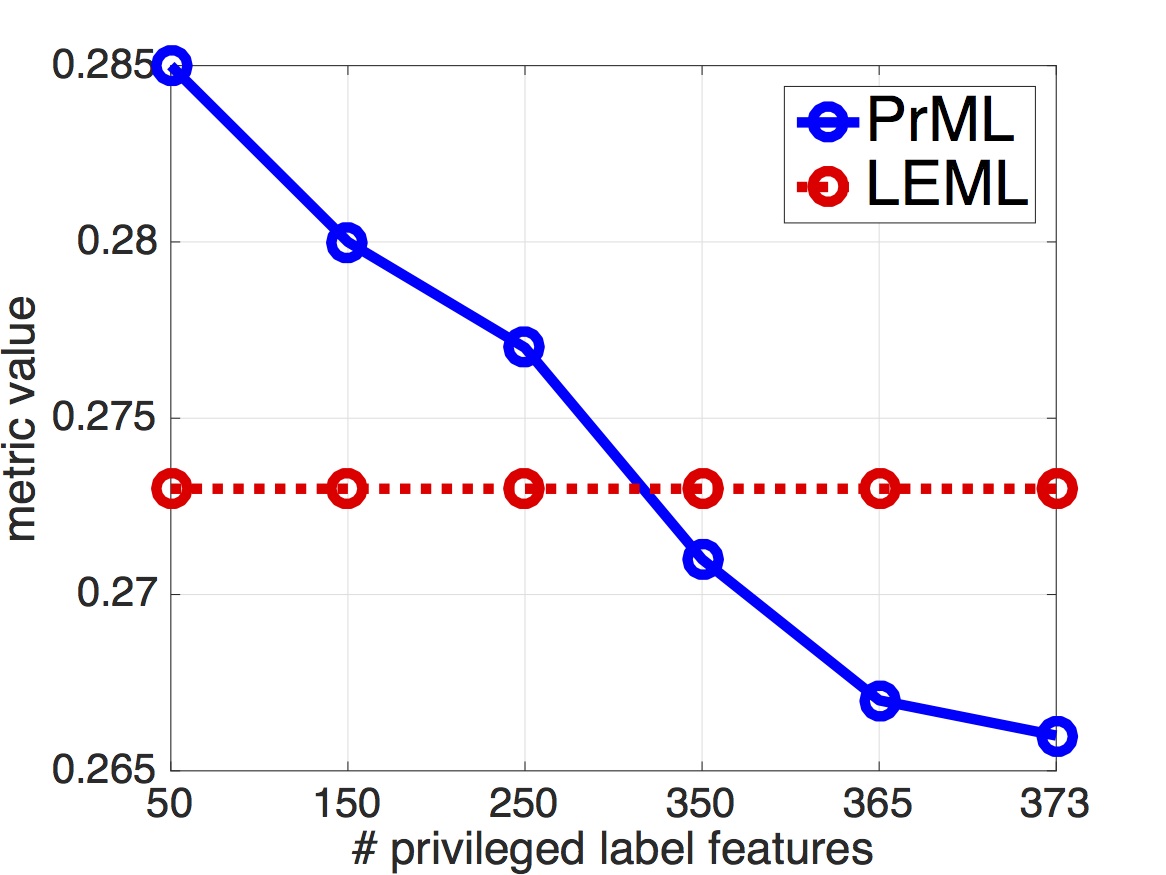}}\\
	\subfloat[Ranking loss $\downarrow$]
	{\includegraphics[width=0.32\columnwidth]{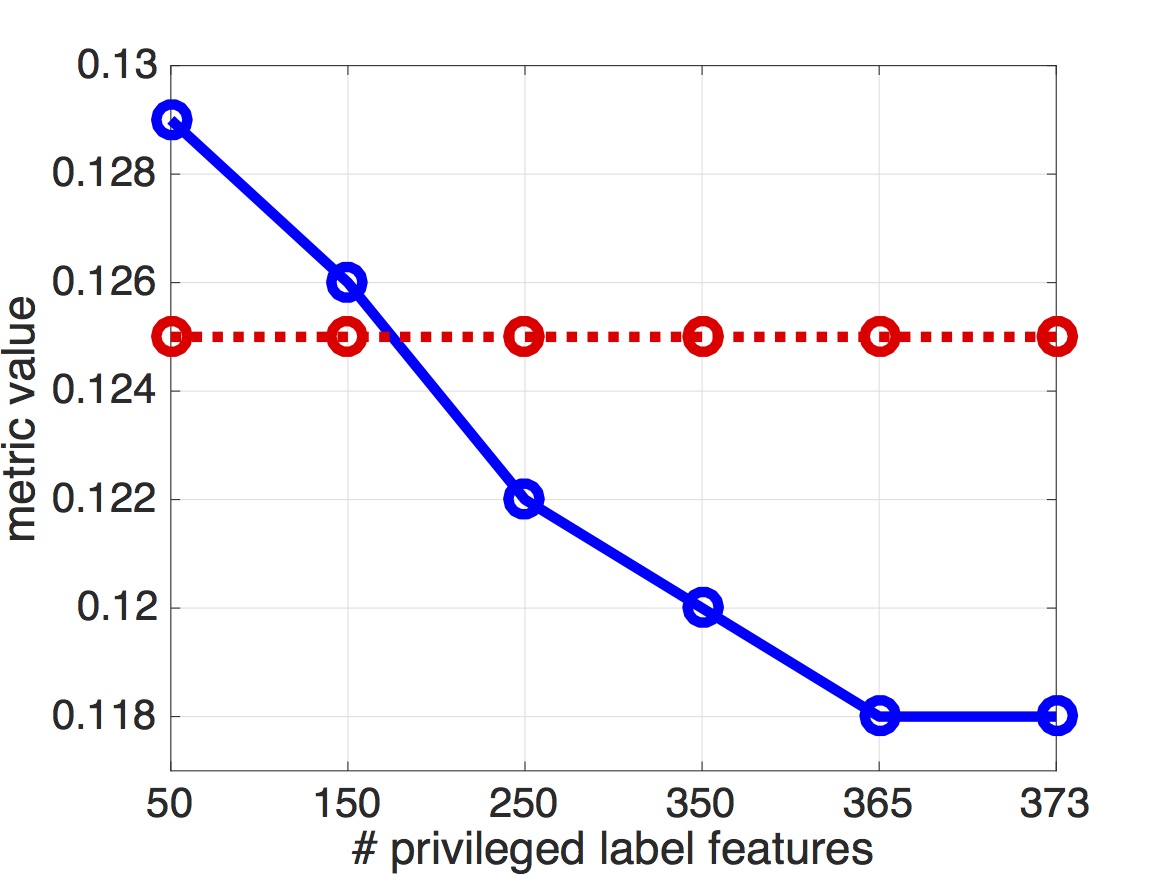}}
	\ 
	\subfloat[Average precision $\uparrow$]
	{\includegraphics[width=0.32\columnwidth]{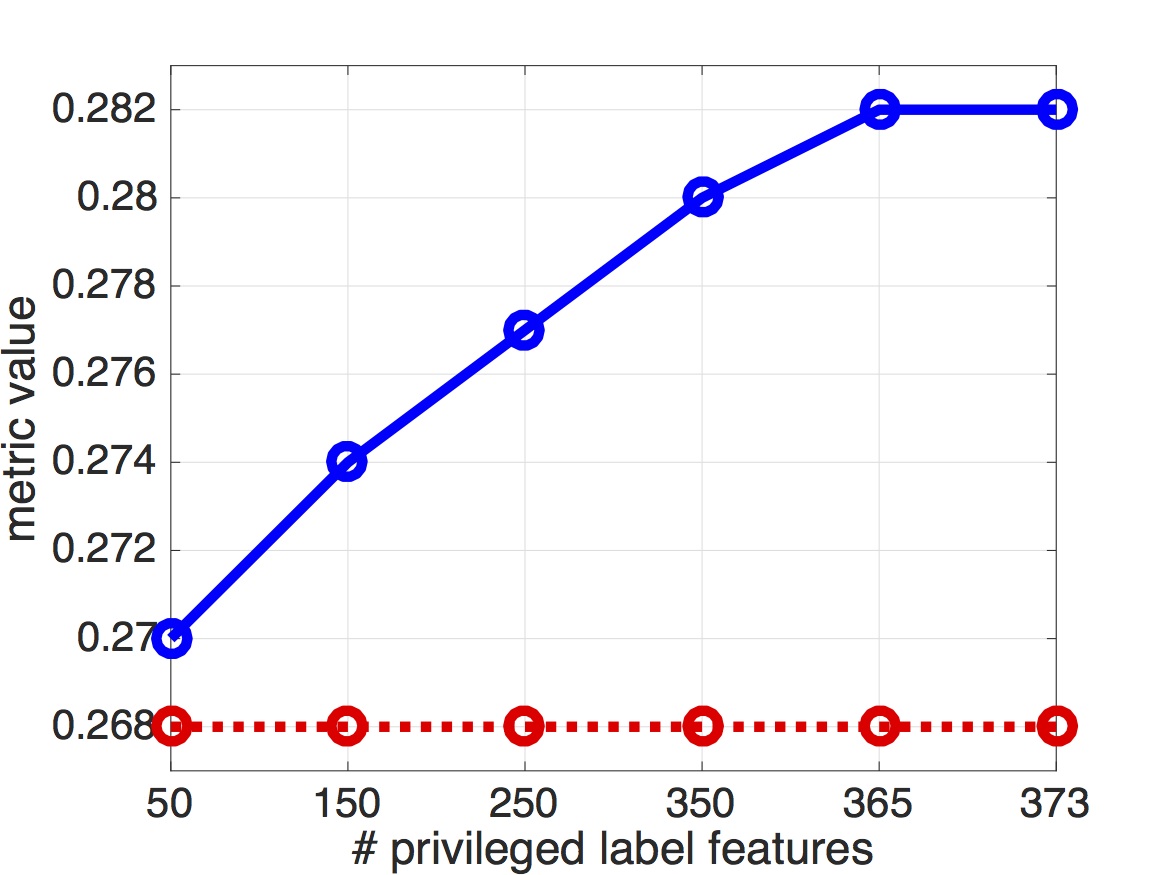}}
	\ 
	\subfloat[Macro-averaging AUC $\uparrow$]
	{\includegraphics[width=0.32\columnwidth]{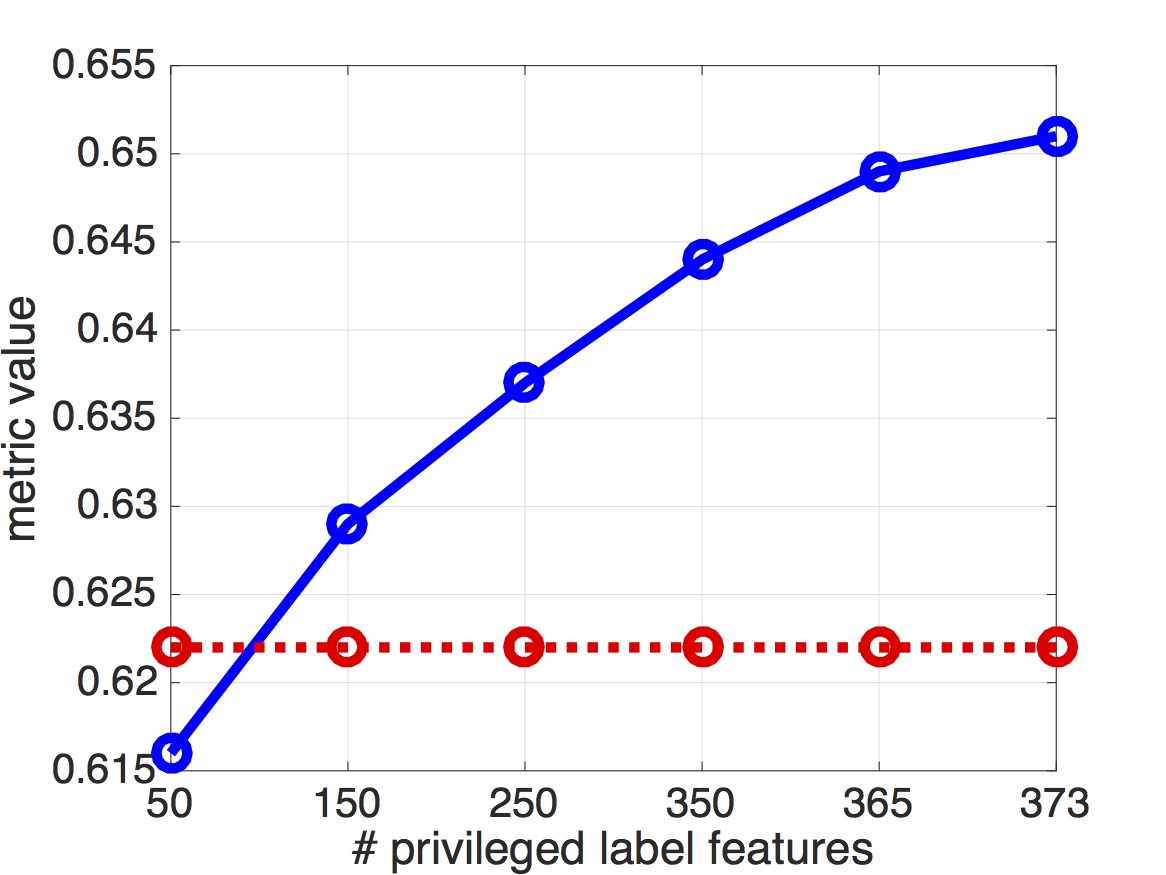}}
	\caption{Classification results of PrML (blue solid line) and LEML (red dashed line) on corel5k (50\% for training \& 50\% for testing) \wrt different dimension of privileged label features.}
	\label{fig1}
\end{figure}

\subsection{Algorithm analysis}

\textbf{Performance visualization. } First we analyze our proposed PrML and on a global sense, we select the benchmark image dataset corel5k and visualize the results of image annotation to directly examine how PrML functions.  For the learning process, we randomly selected 50\% examples without repeating as the training set and the rest ones as the testing set. In our experiment, parameter $\gamma_1$ is set to be 1; $\gamma_2$ and $C$ are in the range of $0.1\sim2.0$ and $10^{-3}\sim20$ respectively, and determined using cross validation by a part of training points.  The embedding dimension $k$ is set to be $k=\lceil0.9L\rceil$, where $\lceil r\rceil$ is the smallest integer greater than $r$. Some of the annotation results is presented in Figure \ref{fig:images}, where the left tags are the ground-truth and the right ones are the top five predicted tags.

As shown in Figure \ref{fig:images}, we can safely conclude that the proposed PrML performs well on the image annotation tasks. It can predict correctly the semantic labels in most cases. Note that although in some cases the predicted labels are not in the ground-truth, they are essentially related in semantic sense. For example, the ``swimmers" in image (b) would be much natural when it comes to the ``pool". Moreover, we can also see that PrML would make supplementary predictions to describe images, enriching the corresponding content. For example, the ``grass" in image (c) and ``buildings" in image (d) are the missing objects in ground-truth labels.

\textbf{Validation of privileged label features.} We then validate the effectiveness of the proposed privileged information for multi-label learning. As discussed previously, the privileged label features serve as an guidance or comments from an Oracle teacher to connect the learning of all the labels. For the sake of fairness, we simply implement the validation experiments with LEML (without privileged label features) and PrML (with privileged label features). Note that our proposed privileged label features are composed with the values of labels; however, not all labels have prominent connections in multi-label learning \cite{sun2014multi}. Thus we selectively construct the privileged label features with respect to each label.

Particularly, we just use K-nearest neighbor rule to form the pool per label. For each label, only labels in its label pool, instead of the whole label set, are reckoned to provide mutual guidance during its learning. In our implementation, we simply utilize Hamming distance to accomplish search of K-nearest neighbor on the dataset corel5k. The experimental setting is the same with before and both algorithms share the same embedding dimension $k$. Moreover, we carry out independent tests ten times and the average results are shown in Figure \ref{fig1}.

As shown in Figure \ref{fig1}, we have the following two observations. (a) PrML is clearly superior to LEML when we select enough labels as privileged label features, \eg~more than 350 labels in corel5k dataset. Since their only difference lies in the usage of the privileged information, we can conclude that the guidance from the privileged information, \ie~the proposed privileged label features, can significantly improve the performance of multi-label learning. (b) With more labels involved in the privileged label features, the performance of PrML keeps improving in a steady speed, and when the dimension of privileged label features is large enough, the performance tends to stabilize on the whole.

The number of labels is directly related to the complexity of correcting function defined as a linear function. Thus few labels might induce the low function complexity, and the correcting function can not determine the optimal slack variables. In this way, the fault-tolerant capacity would be crippled and thus the performance is even worse than LEML. For example, when the dimension of privileged labels is less than 250 on corel5k, the Hamming loss, One-error, Coverage and Ranking loss of PrML is much larger than LEML. In contrast, overmuch labels might introduce unnecessary guidance of labels, and the \textit{extra} labels thus make no contribution to the further improvement of classification performance. For instance, the performance with 365 labels involved in privileged label features would be on par with that of all the other (373) labels in Hamming loss, One-error, Ranking loss and Average precision. Moreover, in real applications, it is still a safe choice that all other labels are involved in privileged information.

\begin{table*}[ht]
	\caption{Average predictive performance (mean $\pm$ std. deviation) of ten indepedent trails for various multi-label learning methods. In each trail, 50\% examples are randomly selected without repeating as training set and the rest as testing set. The top performance among all methods is marked in boldface.}
	\label{biao}
	\centering
	\scriptsize
	\begin{tabular}{cr|c|c|c|c|c|c}%
		dataset& method& Hamming loss $\downarrow$ & One-error $\downarrow$& Coverage $\downarrow$& Ranking loss $\downarrow$  & Aver precision $\uparrow$ &Mac AUC $\uparrow$    \\ \hline
		\multirow{4}{*}{enron}&BR &0.060$\pm$0.001&0.498$\pm$0.012&0.595$\pm$0.010&0.308$\pm$0.007&0.449$\pm$0.011&0.579$\pm$0.007\\
		&ECC  &0.056$\pm$0.001&0.293$\pm$0.008&0.349$\pm$0.014&0.133$\pm$0.004&0.651$\pm$0.006&0.646$\pm$0.008\\
		&RAKEL&0.058$\pm$0.001&0.412$\pm$0.016&0.523$\pm$0.008&0.241$\pm$0.005&0.539$\pm$0.006&0.596$\pm$0.007\\
		&LEML&\textbf{0.049$\pm$0.002}&0.320$\pm$0.004&0.276$\pm$0.005&0.117$\pm$0.006&0.661$\pm$0.004&0.625$\pm$0.007 \\
		&ML$^2$& 0.051$\pm$0.001&\textbf{0.258$\pm$0.090}&0.256$\pm$0.017&0.090$\pm$0.012&0.681$\pm$0.053&\textbf{0.714$\pm$0.021}\\ \hline
		&PrBR&0.053$\pm$0.001 & 0.342$\pm$0.010 & 0.238$\pm$0.006 & \textbf{0.088$\pm$0.003} & 0.618$\pm$0.004& 0.638$\pm$0.005\\
		&PrML&0.050$\pm$0.001&0.288$\pm$0.005&\textbf{0.221$\pm$0.005}&\textbf{0.088$\pm$0.006}&\textbf{0.685$\pm$0.005}&0.674$\pm$0.004 \\ \hline \hline
		\multirow{4}{*}{yeast} &BR &0.201$\pm$0.003&0.256$\pm$0.008&0.641$\pm$0.005&0.315$\pm$0.005&0.672$\pm$0.005&0.565$\pm$0.003\\
		&ECC  &0.207$\pm$0.003&0.244$\pm$0.009&0.464$\pm$0.005&0.186$\pm$0.003&0.752$\pm$0.006&0.646$\pm$0.003\\
		&RAKEL&0.202$\pm$0.003&0.251$\pm$0.008&0.558$\pm$0.006&0.245$\pm$0.004&0.720$\pm$0.005&0.614$\pm$0.003\\
		&LEML&0.201$\pm$0.004&0.224$\pm$0.003&0.480$\pm$0.005&0.174$\pm$0.004&0.751$\pm$0.006&0.642$\pm$0.004 \\
		&ML$^2$&\textbf{0.196$\pm$0.003}&0.228$\pm$0.009&\textbf{0.454$\pm$0.004}&0.168$\pm$0.003&0.765$\pm$0.005&\textbf{0.702$\pm$0.007} \\ \hline
		&PrBR&0.227$\pm$0.004&0.237$\pm$0.006&0.487$\pm$0.005&0.204$\pm$0.003&0.719$\pm$0.005&0.623$\pm$0.004 \\
		&PrML&0.201$\pm$0.003&\textbf{0.214$\pm$0.005}&0.459$\pm$0.004&\textbf{0.165$\pm$0.003}&\textbf{0.771$\pm$0.003}&0.685$\pm$0.003 \\ \hline \hline
		\multirow{4}{*}{corel5k} &BR&0.012$\pm$0.001&0.849$\pm$0.008 &0.898$\pm$0.003&0.655$\pm$0.004&0.101$\pm$0.003&0.518$\pm$0.001\\
		&ECC  &0.015$\pm$0.001&0.699$\pm$0.006&0.562$\pm$0.007&0.292$\pm$0.003&0.264$\pm$0.003&0.568$\pm$0.003\\
		&RAKEL&0.012$\pm$0.001&0.819$\pm$0.010&0.886$\pm$0.004&0.627$\pm$0.004&0.122$\pm$0.004&0.521$\pm$0.001\\
		&LEML&\textbf{0.010$\pm$0.001}&0.683$\pm$0.006&0.273$\pm$0.008&0.125$\pm$0.003&0.268$\pm$0.005&0.622$\pm$0.006 \\
		&ML$^2$& \textbf{0.010$\pm$0.001}&\textbf{0.647$\pm$0.007}&0.372$\pm$0.006&0.163$\pm$0.003&\textbf{0.297$\pm$0.002}&\textbf{0.667$\pm$0.007}\\ \hline
		&PrBR&\textbf{0.010$\pm$0.001}&0.740$\pm$0.007&0.367$\pm$0.005&0.165$\pm$0.004&0.227$\pm$0.004&0.560$\pm$0.005\\
		&PrML&\textbf{0.010$\pm$0.001}&0.675$\pm$0.003&\textbf{0.266$\pm$0.007}&\textbf{0.118$\pm$0.003}&0.282$\pm$0.005&0.651$\pm$0.004  \\ \hline \hline
		\multirow{4}{*}{bibtex} &BR&0.015$\pm$0.001&0.559$\pm$0.004&0.461$\pm$0.006&0.303$\pm$0.004&0.363$\pm$0.004&0.624$\pm$0.002 \\
		&ECC  &0.017$\pm$0.001&0.404$\pm$0.003&0.327$\pm$0.008&0.192$\pm$0.003&0.515$\pm$0.004&0.763$\pm$0.003\\
		&RAKEL&0.015$\pm$0.001&0.506$\pm$0.005&0.443$\pm$0.006&0.286$\pm$0.003&0.399$\pm$0.004&0.641$\pm$0.002\\
		&LEML&0.013$\pm$0.001&0.394$\pm$0.004&0.144$\pm$0.002&0.082$\pm$0.003&0.534$\pm$0.002&0.757$\pm$0.003 \\
		&ML$^2$&0.013$\pm$0.001&\textbf{0.365$\pm$0.004}&\textbf{0.128$\pm$0.003}&0.067$\pm$0.002&\textbf{0.596$\pm$0.004}&\textbf{0.911$\pm$0.002} \\ \hline
		&PrBR&0.014$\pm$0.001&0.426$\pm$0.004&0.178$\pm$0.010&0.096$\pm$0.005&0.529$\pm$0.009&0.702$\pm$0.003 \\
		&PrML&\textbf{0.012$\pm$0.001}&0.367$\pm$0.003&0.131$\pm$0.007&\textbf{0.066$\pm$0.003}&0.571$\pm$0.004&0.819$\pm$0.005 \\ \hline\hline
		\multirow{4}{*}{eurlex} &BR&0.018$\pm$0.004&0.537$\pm$0.002&0.322$\pm$0.008&0.186$\pm$0.009&0.388$\pm$0.005&0.689$\pm$0.007\\
		&ECC  &0.011$\pm$0.003&0.492$\pm$0.003&0.298$\pm$0.004&0.155$\pm$0.006&0.458$\pm$0.004&0.787$\pm$0.009\\
		&RAKEL&0.009$\pm$0.004&0.496$\pm$0.007&0.277$\pm$0.009&0.161$\pm$0.001&0.417$\pm$0.010&0.822$\pm$0.005\\
		&LEML& 0.003$\pm$0.002&0.447$\pm$0.005&0.233$\pm$0.003&0.103$\pm$0.010&0.488$\pm$0.006&0.821$\pm$0.014\\
		&ML$^2$&\textbf{0.001$\pm$0.001}&0.320$\pm$0.001&\textbf{0.171$\pm$0.003}&\textbf{0.045$\pm$0.007}&0.497$\pm$0.003&0.885$\pm$0.003 \\ \hline
		&PrBR&0.007$\pm$0.008&0.484$\pm$0.003&0.229$\pm$0.009&0.108$\pm$0.009&0.455$\pm$0.003&0.793$\pm$0.008\\
		&PrML&\textbf{0.001$\pm$0.002}&\textbf{0.299$\pm$0.003}&0.192$\pm$0.008&0.057$\pm$0.002&\textbf{0.526$\pm$0.009}&\textbf{0.892$\pm$0.004}\\ \hline\hline
		\multirow{4}{*}{mediamill} &BR&0.031$\pm$0.001&0.200$\pm$0.003& 0.575$\pm$0.003&0.230$\pm$0.001&0.502$\pm$0.002&0.510$\pm$0.001\\
		&ECC  &0.035$\pm$0.001&0.150$\pm$0.005&0.467$\pm$0.009&0.179$\pm$0.008&0.597$\pm$0.014&0.524$\pm$0.001\\
		&RAKEL&0.031$\pm$0.001&0.181$\pm$0.002&0.560$\pm$0.002&0.222$\pm$0.001&0.521$\pm$0.001&0.513$\pm$0.001\\
		&LEML&0.030$\pm$0.001&\textbf{0.126$\pm$0.003}&0.184$\pm$0.007&0.084$\pm$0.004&0.720$\pm$0.007&0.699$\pm$0.010 \\
		&ML$^2$& 0.035$\pm$0.002&0.231$\pm$0.004&0.278$\pm$0.003&0.121$\pm$0.003&0.647$\pm$0.002&\textbf{0.847$\pm$0.003}\\ \hline
		&PrBR&0.031$\pm$0.001&0.147$\pm$0.005&0.255$\pm$0.003&0.092$\pm$0.002& 0.648$\pm$0.003&0.641$\pm$0.004\\
		&PrML&\textbf{0.029$\pm$0.002}&0.130$\pm$0.002&\textbf{0.172$\pm$0.004}&\textbf{0.055$\pm$0.006}&\textbf{0.726$\pm$0.002}&0.727$\pm$0.008  \\ \hline
	\end{tabular}
\end{table*}

\subsection{Performance comparison}
Now we formally analyze the performance of the proposed privileged multi-label learning (PrML) in comparison with popular state-of-the-art methods. For each dataset, we randomly selected 50\% examples without repeating as the training set and the rest for testing. For the results' credibility, the dataset division process is implemented ten times independently and we recorded the corresponding results in each trail. Parameters $\gamma_1,\gamma_2$ and $C$ are determined in the same manner as before. As for the low embedding dimension $k$, following the wisdom of \cite{yu2014large}, we choose $k$ to be in $\{\lceil0.8L\rceil,\lceil0.85L\rceil,\lceil0.9L\rceil,\lceil0.95L\rceil\}$ and determined by cross validation using a part of training points. Particularly, we also cover the PrBR (privileged information + BR) to further investigate the proposed privileged information. The detailed results are reported in Table \ref{biao}.

From Table \ref{biao}, we can see the proposed PrML is comparable to the state-of-the-art ML$^2$ method, and significantly surpasses the other competing multi-label methods. Concretely, across all evaluation metrics and datasets, PrML ranks first in 52.8\% cases and the first two in all cases; even in the second place, PrML's performance is close to the top one. Comparing BR \& PrBR, and LEML \& PrML, we can safely infer that the privileged information plays an important role in enhancing the classification performance of multi-label predictors. Besides, in all the 36 cases, PrML wins 34 cases against PrBR and plays a tie twice in Ranking loss on enron and Hamming loss on corel5k respectively, which implies that the low-rank structure in PrML has positive impact in further improving the multi-label performance. Therefore, we can see PrML has inherited the merits of both low-rank parameter structure and privileged label information. In addition, PrML and LEML tend to perform better on datasets with more labels ($>$100). This might be because the low-rank assumption is more sensible when the number of labels is considerably large.

\section{Conclusion}
In this paper, we investigate the intrinsic privileged information to connect labels in multi-label learning. Tactfully, we regard the label values as the privileged label features. This strategy indicates that for each label's learning, other labels of each example may serve as its Oracle comments on the learning of this label. Without the requirement of additional data, we propose to actively construct privileged label features directly from the label space. Then we integrate the privileged information with the  low-rank hypotheses $Z=D^TW$ in multi-label learning, and formulate privileged multi-label learning (PrML) as a result. During the optimization, both the dictionary $D$ and the coefficient matrix $W$ can receive the comments from the privileged information. And experimental results show that with this very privileged information, the classification performance can be significantly improved. Thus we can also take the privileged label features as a way to boost the classification performance of the low-rank based models.

As for the future work, our proposed PrML can be easily extended into Kernel version to cohere with the nonlinearity in the  parameter space. Besides, using SVM-style $L_2$-hinge loss might further improve the training efficiency \cite{xu2016simple}. Theoretical guarantees will be also investigated.



\bibliographystyle{iclr2017_conference}
\bibliography{reference}

\begin{thebibliography}{33}
\providecommand{\natexlab}[1]{#1}
\providecommand{\url}[1]{\texttt{#1}}
\expandafter\ifx\csname urlstyle\endcsname\relax
  \providecommand{\doi}[1]{doi: #1}\else
  \providecommand{\doi}{doi: \begingroup \urlstyle{rm}\Url}\fi

\bibitem[Balasubramanian \& Lebanon(2012)Balasubramanian and
  Lebanon]{balasubramanian2012landmark}
Krishnakumar Balasubramanian and Guy Lebanon.
\newblock The landmark selection method for multiple output prediction.
\newblock In \emph{Proceedings of the 29th International Conference on Machine
  Learning (ICML-12)}, pp.\  983--990, 2012.

\bibitem[Bappy et~al.(2016)Bappy, Paul, and Roy-Chowdhury]{bappy2016online}
Jawadul~H Bappy, Sujoy Paul, and Amit~K Roy-Chowdhury.
\newblock Online adaptation for joint scene and object classification.
\newblock In \emph{European Conference on Computer Vision}, pp.\  227--243.
  Springer, 2016.

\bibitem[Bhatia et~al.(2015)Bhatia, Jain, Kar, Varma, and
  Jain]{bhatia2015sparse}
Kush Bhatia, Himanshu Jain, Purushottam Kar, Manik Varma, and Prateek Jain.
\newblock Sparse local embeddings for extreme multi-label classification.
\newblock In \emph{Advances in Neural Information Processing Systems}, pp.\
  730--738, 2015.

\bibitem[Cesa-Bianchi et~al.(2012)Cesa-Bianchi, Re, and
  Valentini]{cesa2012synergy}
Nicol{\`o} Cesa-Bianchi, Matteo Re, and Giorgio Valentini.
\newblock Synergy of multi-label hierarchical ensembles, data fusion, and
  cost-sensitive methods for gene functional inference.
\newblock \emph{Machine Learning}, 88\penalty0 (1-2):\penalty0 209--241, 2012.

\bibitem[Chiang et~al.(2016)Chiang, Lee, and Lin]{kdd2016}
Wei-Lin Chiang, Mu-Chu Lee, and Chih-Jen Lin.
\newblock Parallel dual coordinate descent method for large-scale linear
  classification in multi-core environments.
\newblock In \emph{Proceedings of the 21th ACM SIGKDD International Conference
  on Knowledge Discovery and Data Mining}. ACM, 2016.

\bibitem[Fouad et~al.(2013)Fouad, Tino, Raychaudhury, and
  Schneider]{fouad2013incorporating}
Shereen Fouad, Peter Tino, Somak Raychaudhury, and Petra Schneider.
\newblock Incorporating privileged information through metric learning.
\newblock \emph{IEEE transactions on neural networks and learning systems},
  24\penalty0 (7):\penalty0 1086--1098, 2013.

\bibitem[F{\"u}rnkranz et~al.(2008)F{\"u}rnkranz, H{\"u}llermeier, Menc{\'\i}a,
  and Brinker]{furnkranz2008multilabel}
Johannes F{\"u}rnkranz, Eyke H{\"u}llermeier, Eneldo~Loza Menc{\'\i}a, and
  Klaus Brinker.
\newblock Multilabel classification via calibrated label ranking.
\newblock \emph{Machine learning}, 73\penalty0 (2):\penalty0 133--153, 2008.

\bibitem[Hou et~al.(2016)Hou, Geng, and Zhang]{ML2}
Peng Hou, Xin Geng, and Min-Ling Zhang.
\newblock Multi-label manifold learningn.
\newblock In \emph{Thirtieth AAAI Conference on Artificial Intelligence}, 2016.

\bibitem[Hsieh et~al.(2008)Hsieh, Chang, Lin, Keerthi, and
  Sundararajan]{hsieh2008dual}
Cho-Jui Hsieh, Kai-Wei Chang, Chih-Jen Lin, S~Sathiya Keerthi, and
  Sellamanickam Sundararajan.
\newblock A dual coordinate descent method for large-scale linear svm.
\newblock In \emph{Proceedings of the 25th international conference on Machine
  learning}, pp.\  408--415. ACM, 2008.

\bibitem[Hsu et~al.(2009)Hsu, Kakade, Langford, and Zhang]{hsu2009multi}
Daniel Hsu, Sham Kakade, John Langford, and Tong Zhang.
\newblock Multi-label prediction via compressed sensing.
\newblock In \emph{NIPS}, volume~22, pp.\  772--780, 2009.

\bibitem[Li et~al.(2016)Li, Dai, Tan, Xu, and Van~Gool]{li2016fast}
Wen Li, Dengxin Dai, Mingkui Tan, Dong Xu, and Luc Van~Gool.
\newblock Fast algorithms for linear and kernel svm+.
\newblock In \emph{Proceedings of the IEEE Conference on Computer Vision and
  Pattern Recognition}, pp.\  2258--2266, 2016.

\bibitem[Li et~al.(2015)Li, Ouyang, and Zhou]{li2015supervised}
Ximing Li, Jihong Ouyang, and Xiaotang Zhou.
\newblock Supervised topic models for multi-label classification.
\newblock \emph{Neurocomputing}, 149:\penalty0 811--819, 2015.

\bibitem[Motiian et~al.(2016)Motiian, Piccirilli, Adjeroh, and
  Doretto]{Motiian_2016_CVPR}
Saeid Motiian, Marco Piccirilli, Donald~A. Adjeroh, and Gianfranco Doretto.
\newblock Information bottleneck learning using privileged information for
  visual recognition.
\newblock In \emph{The IEEE Conference on Computer Vision and Pattern
  Recognition (CVPR)}, June 2016.

\bibitem[Pechyony \& Vapnik(2010)Pechyony and Vapnik]{pechyony2010theory}
Dmitry Pechyony and Vladimir Vapnik.
\newblock On the theory of learnining with privileged information.
\newblock In \emph{Advances in neural information processing systems}, pp.\
  1894--1902, 2010.

\bibitem[Pechyony et~al.(2010)Pechyony, Izmailov, Vashist, and
  Vapnik]{pechyony2010smo}
Dmitry Pechyony, Rauf Izmailov, Akshay Vashist, and Vladimir Vapnik.
\newblock Smo-style algorithms for learning using privileged information.
\newblock In \emph{DMIN}, pp.\  235--241, 2010.

\bibitem[Ranjan et~al.(2015)Ranjan, Rasiwasia, and Jawahar]{Ranjan_2015_ICCV}
Viresh Ranjan, Nikhil Rasiwasia, and C.~V. Jawahar.
\newblock Multi-label cross-modal retrieval.
\newblock In \emph{The IEEE International Conference on Computer Vision
  (ICCV)}, December 2015.

\bibitem[Read et~al.(2011)Read, Pfahringer, Holmes, and
  Frank]{read2011classifier}
Jesse Read, Bernhard Pfahringer, Geoff Holmes, and Eibe Frank.
\newblock Classifier chains for multi-label classification.
\newblock \emph{Machine learning}, 85\penalty0 (3):\penalty0 333--359, 2011.

\bibitem[Sharmanska et~al.(2013)Sharmanska, Quadrianto, and
  Lampert]{sharmanska2013learning}
Viktoriia Sharmanska, Novi Quadrianto, and Christoph~H Lampert.
\newblock Learning to rank using privileged information.
\newblock In \emph{Proceedings of the IEEE International Conference on Computer
  Vision}, pp.\  825--832, 2013.

\bibitem[Sun et~al.(2014)Sun, Tang, Li, Qi, and Huang]{sun2014multi}
Fuming Sun, Jinhui Tang, Haojie Li, Guo-Jun Qi, and Thomas~S Huang.
\newblock Multi-label image categorization with sparse factor representation.
\newblock \emph{Image Processing, IEEE Transactions on}, 23\penalty0
  (3):\penalty0 1028--1037, 2014.

\bibitem[Tai \& Lin(2012)Tai and Lin]{tai2012multilabel}
Farbound Tai and Hsuan-Tien Lin.
\newblock Multilabel classification with principal label space transformation.
\newblock \emph{Neural Computation}, 24\penalty0 (9):\penalty0 2508--2542,
  2012.

\bibitem[Takac et~al.(2015)Takac, Richtarik, and Srebro]{takac2015distributed}
Martin Takac, Peter Richtarik, and Nathan Srebro.
\newblock Distributed mini-batch sdca.
\newblock \emph{arXiv preprint arXiv:1507.08322}, 2015.

\bibitem[Tsoumakas et~al.(2010)Tsoumakas, Katakis, and
  Vlahavas]{tsoumakas2010mining}
Grigorios Tsoumakas, Ioannis Katakis, and Ioannis Vlahavas.
\newblock Mining multi-label data.
\newblock In \emph{Data mining and knowledge discovery handbook}, pp.\
  667--685. Springer, 2010.

\bibitem[Tsoumakas et~al.(2011)Tsoumakas, Katakis, and
  Vlahavas]{tsoumakas2011random}
Grigorios Tsoumakas, Ioannis Katakis, and Ioannis Vlahavas.
\newblock Random k-labelsets for multilabel classification.
\newblock \emph{IEEE Transactions on Knowledge and Data Engineering},
  23\penalty0 (7):\penalty0 1079--1089, 2011.

\bibitem[Vapnik \& Izmailov(2015)Vapnik and Izmailov]{vapnik2015learning}
Vladimir Vapnik and Rauf Izmailov.
\newblock Learning using privileged information: Similarity control and
  knowledge transfer.
\newblock \emph{Journal of Machine Learning Research}, 16:\penalty0 2023--2049,
  2015.

\bibitem[Vapnik \& Vashist(2009)Vapnik and Vashist]{vapnik2009new}
Vladimir Vapnik and Akshay Vashist.
\newblock A new learning paradigm: Learning using privileged information.
\newblock \emph{Neural Networks}, 22\penalty0 (5):\penalty0 544--557, 2009.

\bibitem[Vapnik et~al.(2009)Vapnik, Vashist, and
  Pavlovitch]{vapnik2009learning}
Vladimir Vapnik, Akshay Vashist, and Natalya Pavlovitch.
\newblock Learning using hidden information (learning with teacher).
\newblock In \emph{2009 International Joint Conference on Neural Networks},
  pp.\  3188--3195. IEEE, 2009.

\bibitem[Wang et~al.(2016)Wang, Yang, Mao, Huang, Huang, and
  Xu]{Wang_2016_CVPR}
Jiang Wang, Yi~Yang, Junhua Mao, Zhiheng Huang, Chang Huang, and Wei Xu.
\newblock Cnn-rnn: A unified framework for multi-label image classification.
\newblock In \emph{The IEEE Conference on Computer Vision and Pattern
  Recognition (CVPR)}, June 2016.

\bibitem[Xu et~al.(2016)Xu, Zhou, Tsang, Qin, Goh, and Liu]{xu2016simple}
Xinxing Xu, Joey~Tianyi Zhou, IvorW Tsang, Zheng Qin, Rick Siow~Mong Goh, and
  Yong Liu.
\newblock Simple and efficient learning using privileged information.
\newblock \emph{arXiv preprint arXiv:1604.01518}, 2016.

\bibitem[Yang et~al.(2009)Yang, Sun, Wang, and Chen]{yang2009effective}
Bishan Yang, Jian-Tao Sun, Tengjiao Wang, and Zheng Chen.
\newblock Effective multi-label active learning for text classification.
\newblock In \emph{Proceedings of the 15th ACM SIGKDD international conference
  on Knowledge discovery and data mining}, pp.\  917--926. ACM, 2009.

\bibitem[Yang et~al.(2016)Yang, Tianyi~Zhou, Zhang, Gao, Wu, and
  Cai]{Yang_2016_CVPR}
Hao Yang, Joey Tianyi~Zhou, Yu~Zhang, Bin-Bin Gao, Jianxin Wu, and Jianfei Cai.
\newblock Exploit bounding box annotations for multi-label object recognition.
\newblock In \emph{The IEEE Conference on Computer Vision and Pattern
  Recognition (CVPR)}, June 2016.

\bibitem[Yu et~al.(2014)Yu, Jain, Kar, and Dhillon]{yu2014large}
Hsiang-Fu Yu, Prateek Jain, Purushottam Kar, and Inderjit Dhillon.
\newblock Large-scale multi-label learning with missing labels.
\newblock In \emph{Proceedings of The 31st International Conference on Machine
  Learning}, pp.\  593--601, 2014.

\bibitem[Zhang \& Wu(2015)Zhang and Wu]{zhang2015lift}
Min-Ling Zhang and Lei Wu.
\newblock Lift: Multi-label learning with label-specific features.
\newblock \emph{IEEE transactions on pattern analysis and machine
  intelligence}, 37\penalty0 (1):\penalty0 107--120, 2015.

\bibitem[Zhang \& Schneider(2011)Zhang and Schneider]{zhang2011multi}
Yi~Zhang and Jeff~G Schneider.
\newblock Multi-label output codes using canonical correlation analysis.
\newblock In \emph{International Conference on Artificial Intelligence and
  Statistics}, pp.\  873--882, 2011.

\end{thebibliography}
\newpage 
\begin{appendix}
	\section{Deduction of Eq.(6)'s dual problem Eq.(7)}
	Without the loss of generality, the objective of Eq.(6) can be rewritten into $\dfrac{1}{2}(\norm{\w_i}_2^2 + \gamma\norm{\tilde{\w}_i}_2^2)  + C\sum_{j=1}^n \langle \tilde{\w}_i, \tilde{\y}_{i,j}\rangle$ with $1\leftarrow \gamma_1, \gamma \leftarrow \gamma_2/\gamma_1, C\leftarrow C/\gamma_1$. Then its Lagrangian function is defined as
	\begin{equation*}
	\begin{split}
	&\mathcal{L}(\w_i,\tilde{\w}_i,\balpha,\bbeta)= \dfrac{1}{2}(\norm{\w_i}_2^2 + \gamma\norm{\tilde{\w}_i}_2^2)  + C\sum_{j=1}^n \langle \tilde{\w}_i, \tilde{\y}_{i,j}\rangle\\
	& -\sum_{j=1}^n \alpha_j[Y_{ij}\langle \w_i,D\x_j\rangle- 1+ \langle \tilde{\w}_i, \tilde{\y}_{i,j}\rangle] -\sum_{j=1}^n \beta_j \langle \tilde{\w}_i, \tilde{\y}_{i,j}\rangle\\
	= & \dfrac{1}{2}(\norm{\w_i}_2^2 + \gamma\norm{\tilde{\w}_i}_2^2) -\sum_{j=1}^n \alpha_jY_{ij}\langle \w_i,D\x_j\rangle\\
	&+ \sum_{j=1}^n (C-\alpha_j-\beta_j) \langle \tilde{\w}_i, \tilde{\y}_{i,j}\rangle+ \sum_{j=1}^n \alpha_j
	\end{split}
	\end{equation*}
	where $\alpha_j\geq0, \beta_j\geq0$ are the Lagrange multipliers. Setting the derivatives of $\mathcal{L}$ with respect to $\w_i$ and $\tilde{\w}_i$ to zero, we have
	\begin{equation*}
	\begin{split}
	\w_i &= \sum_{j=1}^n \alpha_jY_{ij}D\x_j,\\
	\tilde{\w}_i &= \frac{1}{\gamma}\sum_{j=1}^n (\alpha_j+\beta_j-C)\tilde{\y}_{i,j}.
	\end{split}
	\end{equation*}
	Then plugging them back to the Lagrangian function, we obtain
	\begin{equation*}
	\begin{split}
	\mathcal{L}(\balpha,\bbeta)=& -\frac{1}{2}\sum_{j=1}^n \sum_{q=1}^n \alpha_j\alpha_qY_{ij}Y_{iq}\langle D\x_j,D\x_q\rangle  + \sum_{j=1}^n \alpha_j\\
	-& \frac{1}{2\gamma}\sum_{j=1}^n \sum_{q=1}^n(\alpha_j+\beta_j-C)(\alpha_q+\beta_q-C)\langle\tilde{\y}_{i,j}, \tilde{\y}_{i,q} \rangle.
	\end{split}
	\end{equation*}
	Denote $\balpha=[\alpha_1;,,,;\alpha_n]$, $\bbeta=[\beta_1;,...;\beta_n]$ and $\y_i^* = [Y_{i1},Y_{i2},...,Y_{in}]^T$. Besides, let 
	$K_D\in\mathbb{R}^{n\times n}$ with $[j,q]$-th element being $K_D(j,q) = \langle D\x_j,D\x_q\rangle$, and $\tilde{K}_i\in\mathbb{R}^{n\times n}$ with $\tilde{K}_i(j,q) = \langle\tilde{\y}_{i,j}, \tilde{\y}_{i,q} \rangle$. Thus the dual problem of Eq.(6) is formulated as
	\begin{equation*}
	\begin{split}
	\max_{\balpha,\bbeta}~& -\frac{1}{2} (\balpha\circ\y_i^*)^TK_D(\balpha\circ\y_i^*)+\bm{1}^T\balpha  - \frac{1}{2\gamma}(\balpha+\bbeta-C\bm{1})^T\tilde{K}_i (\balpha+\bbeta-C\bm{1})
	\end{split}
	\end{equation*}
	with the constraints $\balpha\succeq0,\bbeta\succeq0$, \ie~$\alpha_j\geq0,\beta_j\geq0,\forall j\in [1:n],$ where $\circ$ is the Hadamard (element-wise) product of two vectors or matrices, and $\bm{1}$ is the vector with all ones.
	
	\section{Deduction of Eq.(10)'s dual problem Eq.(11)}	
	Similarly, the Lagrangian function of Problem (10) is defined as
	\begin{equation*}
	\begin{split}
	& \mathcal{L}(D,\tilde{W},A,B) = \dfrac{1}{2}\norm{D}_F^2 + \dfrac{\gamma_2}{2}\sum_{i=1}^L\norm{\tilde{\w}_i}_2^2  + C\sum_{i=1}^L\sum_{j=1}^n \langle \tilde{\w}_i, \tilde{\y}_{i,j}\rangle\\
	& -\sum_{i=1}^L\sum_{j=1}^n A_{ij}[Y_{ij}\langle D, \w_i\x_j^T\rangle - 1 + \langle \tilde{\w}_i, \tilde{\y}_{i,j}\rangle]\\ &-\sum_{i=1}^L\sum_{j=1}^n B_{ij} \langle \tilde{\w}_i, \tilde{\y}_{i,j}\rangle\\
	= &\dfrac{1}{2}\norm{D}_F^2 + \dfrac{\gamma_2}{2}\sum_{i=1}^L\norm{\tilde{\w}_i}_2^2 -\sum_{i=1}^L\sum_{j=1}^n A_{ij}Y_{ij}\langle D, \w_i\x_j^T\rangle\\
	& + \sum_{i=1}^L\sum_{j=1}^n A_{ij} + \sum_{i=1}^L\sum_{j=1}^n (C-A_{ij}-B_{ij})\langle \tilde{\w}_i, \tilde{\y}_{i,j}\rangle
	\end{split}
	\end{equation*}
	where $A_{ij}\geq0, B_{ij}\geq0$ are the Lagrange multipliers. Setting the derivatives of $\mathcal{L}$ with respect to $D$ and $\tilde{\w}_i$ to zero, we have
	\begin{equation*}
	\begin{split}
	D &= \sum_{i=1}^L\sum_{j=1}^n A_{ij}Y_{ij}\w_i\x_j^T,\\
	\tilde{\w}_i &= \frac{1}{\gamma_2}\sum_{j=1}^n (A_{ij}+B_{ij}-C)\tilde{\y}_{i,j}.
	\end{split}
	\end{equation*}	
	Then plugging them back to the Lagrangian function, we obtain
	\begin{equation*}
	\begin{split}
	&\mathcal{L}(A,B)=-\dfrac{1}{2} \sum_{i,p=1}^L\sum_{j,q=1}^n  A_{ij}Y_{ij} A_{pq}Y_{pq}
	\langle\w_i\x_j^T,\w_p\x_q^T\rangle \\
	&- \dfrac{1}{2\gamma_2}
	\sum_{i=1}^L\sum_{j,q=1}^n(A_{ij}+B_{ij}-C)(A_{iq}+B_{iq}-C)\langle\tilde{\y}_{i,j},\tilde{\y}_{i,q}\rangle + \sum_{i=1}^L\sum_{j=1}^n A_{ij} \\
	&=-\dfrac{1}{2} \sum_{i,p=1}^L\sum_{j,q=1}^n A_{ij}Y_{ij} A_{pq}Y_{pq}
	\langle\w_i\x_j^T,\w_p\x_q^T\rangle + \sum_{i=1}^L\sum_{j=1}^n A_{ij}\\
	&  - \dfrac{1}{2\gamma_2}\sum_{i,p=1}^L\sum_{j,q=1}^n(A_{ij}+B_{ij}-C)(A_{pq}+B_{pq}-C)P_{i,j,p,q}
	\end{split}
	\end{equation*}
	where $P_{i,j,p,q} = \langle\tilde{\y}_{i,j},\tilde{\y}_{p,q}\rangle \mathcal{I}(p=i)$ and $\mathcal{I}(\cdot)$ is the indicator function. To make the dual problem of Eq.(10) consistent with Eq.(7), we define a bijection $\phi: [1:L]\times[1:n] \rightarrow [1:Ln]$ as the row-based vectorization index mapping, \ie~$\phi([i,j]) = (i-1)n+j$. In a nutshell, we line the constraints (also the multipliers) according to the ``first label, then training points" principle, \ie~``first $i$, then $j$". Let $s = \phi([i,j])$, $t=\phi([p,q])$ and $vec(A)=\balpha, vec(B) = \bbeta, vec(Y) = \y^*$, where $vec(\cdot)$ means the row-based vectorization manipulation. Moreover, let $G\in\mathbb{R}^{L\times n}$ with $G_{ij} = \w_i\x_j^T$, and $K_W\in\mathbb{R}^{Ln\times Ln}$ with $K_W(s,t) =\langle G_{\phi^{-1}(s)}, G_{\phi^{-1}(t)}\rangle $. Then the dual objective function can be rewritten as
	\begin{equation*}
	\begin{split}
	\mathcal{L}(\balpha,\bbeta)=&-\dfrac{1}{2} \sum_{s=1}^{Ln}\sum_{t=1}^{Ln}  \alpha_t\alpha_sy^*_ty^*_s
	K_W(s,t) + \sum_{s=1}^{Ln} \alpha_s - \dfrac{1}{2\gamma_2}  \sum_{s=1}^{Ln}\sum_{t=1}^{Ln} (\alpha_s+\beta_s -C)(\alpha_t+\beta_t-C)\tilde{K}(s,t)\\
	= & -\frac{1}{2} (\balpha\circ\y^*)^TK_W(\balpha\circ\y^*)+\bm{1}^T\balpha- \frac{1}{2\gamma_2}(\balpha+\bbeta-C\bm{1})^T\tilde{K}(\balpha+\bbeta-C\bm{1})
	\end{split}
	\end{equation*}
	where $\tilde{K}=diag(\tilde{K}_1,\tilde{K}_2,...,\tilde{K}_L)$. And the KKT condition is then rewritten equivelently as
	\begin{equation*}
	\begin{split}
	D &= \sum_{i=1}^L\sum_{j=1}^n A_{ij}Y_{ij}\w_i\x_j^T  = \sum_{s=1}^{Ln}\alpha_sy^*_sG_{\phi^{-1}(s)}\\
	\tilde{\w}_i &= \frac{1}{\gamma_2}\sum_{j=1}^n (A_{ij}+B_{ij}-C)\tilde{\y}_{i,j} = \frac{1}{\gamma_2}\sum_{j=1}^n (\alpha_{\phi([i,j])} + \beta_{\phi([i,j])} - C) \tilde{\y}_{i,j}.
	\end{split}
	\end{equation*}

	\section{Proof of Theorem 1}
	First it is easy to know that Eq.(6) and Eq.(10) share the same optimization form with the following SVM+ problem:
	\begin{equation}\label{SVM+}
	\begin{split}
	\min_{\w,\tilde{\w}} & ~\dfrac{1}{2}(\norm{\w}_2^2 + \gamma\norm{\tilde{\w}}_2^2)  + C\sum_{j=1}^n \langle \tilde{\w}, \tilde{\x}_j\rangle\\
	\mbox{s.t.} & ~y_{j}\langle \w,\x_j\rangle\geq 1- \langle \tilde{\w}, \tilde{\x}_{j}\rangle \\
	&\langle \tilde{\w},\tilde{\x}_j\rangle\geq 0, \forall  j=1,...,n.
	\end{split}
	\end{equation}
	Let $F$ be the objective function:
	$F = \frac{1}{2}\norm{\w}_2^2 + \frac{\gamma}{2}\norm{\tilde{\w}}_2^2 + C\sum_{j=1}^n \langle\tilde{\w},\tilde{\x}_j\rangle$ and denote $\u: = (\w;\tilde{\w})$. Assume $\u_1$ and $\u_2$ are two solutions of Eq.(\ref{SVM+}), \ie~$F(\u_1)=F(\u_2)$ is the minimum of $F(\u)$ and let $\u_t = (1-t)\u_1 + t\u_2, t\in[0,1]$. Because the problem is convex, we have $F(\u_t)\leq(1-t)F(\u_1)+tF(\u_2) = F(\u_1)$, thus $F(\u_t)=F(\u_1)=F(\u_2)$ for all $t\in[0,1]$. As a result, the function $g(t) = F(\u_t)-F(\u_1)\equiv 0, t\in[0,1]$. Then we have
	\begin{align*}
	g(t)=&F(\u_t) - F(\u_1)\\
	=& \frac{1}{2}\norm{t(\w_2-\w_1)+\w_1}_2^2 + \frac{\gamma}{2}\norm{t(\tilde{\w}_2-\tilde{\w}_1) + \tilde{\w}_1}_2^2 \\
	&+ C\sum_{i=1}^n \langle t(\tilde{\w}_2-\tilde{\w}_1) + \tilde{\w}_1,\tilde{\x}_i\rangle - \frac{1}{2}\norm{\w_1}_2^2 - \frac{\gamma}{2}\norm{\tilde{\w}_1}_2^2 - C\sum_{i=1}^n \langle\tilde{\w}_1,\tilde{\x}_i\rangle\\
	= & \frac{1}{2} t^2 \norm{\w_2-\w_1}_2^2 + t\langle\w_2-\w_1,\w_1\rangle +  \frac{\gamma}{2} t^2 \norm{\tilde{\w}_2-\tilde{\w}_1}_2^2\\
	& + t\langle\tilde{\w}_2-\tilde{\w}_1,\tilde{\w}_1\rangle + tC\sum_{i=1}^n \langle \tilde{\w}_2-\tilde{\w}_1,\tilde{\x}_i\rangle
	\end{align*}
	Thus
	$$g''(t) = \norm{\w_2-\w_1}_2^2 + \gamma\norm{\tilde{\w}_2-\tilde{\w}_1}_2^2 = 0.$$
	For $\gamma>0$, then we have $\w_1=\w_2$ and $\tilde{\w}_1=\tilde{\w}_2$, which completes the proof.

\end{appendix}

\end{document}